\documentclass[final,11pt]{article}

\usepackage{bbm}

\usepackage[top=1in,bottom=1in,left=1in,right=1in]{geometry}
\usepackage{graphicx}
\usepackage{times}

\usepackage[top=1in,bottom=1in,left=1in,right=1in]{geometry}
\usepackage{graphicx}
\usepackage{parskip}
\usepackage{times}
\usepackage{amsmath, amssymb, amsfonts, cite}
\usepackage{pgf}
\usepackage{tikz}
\usetikzlibrary{arrows,automata}

\usepackage{nicefrac}
\usepackage{xstring}
\usepackage{mathptmx}
\usepackage{tgtermes}
\usepackage{microtype}
\newcommand{\comment}[1]{}
\usepackage{amsthm}
\usepackage[unicode,psdextra,colorlinks=true,citecolor=blue,bookmarksnumbered=true,hyperfootnotes=false,linkcolor=green!40!black]{hyperref}
\usepackage{cleveref}

\usepackage{geometry}
\usepackage{fullpage}

\usepackage{pdfsync, graphicx}

\usepackage{marginnote}
\usepackage{framed}

\usepackage{booktabs,siunitx}

% distributions

% algorithms

% functions

\DeclareMathOperator*{\E}{\mathbbm{E}}

\newtheorem{theorem}{Theorem}[section]
\newtheorem{proposition}{Proposition}[section]

\newtheorem{fact}[theorem]{Fact}
\newtheorem{lemma}[theorem]{Lemma}

\theoremstyle{definition}

\newtheorem{definition}{Definition}[section]

\crefname{theorem}{Theorem}{Theorems}
\crefname{observation}{Observation}{Observations}
\crefname{claim}{Claim}{Claims}
\crefname{condition}{Condition}{Conditions}
\crefname{example}{Example}{Examples}
\crefname{fact}{Fact}{Facts}
\crefname{lemma}{Lemma}{Lemmas}
\crefname{corollary}{Corollary}{Corollaries}
\crefname{definition}{Definition}{Definitions}
\crefname{remark}{Remark}{Remarks}

\renewcommand{\small}{\normalsize}

%Probability
\renewcommand{\Pr}[2][]{\ensuremath{\mathbb{P}_{#1}\insq{#2}}}
\newcommand{\Ex}[2][]{\ensuremath{\mathbb{E}_{#1}\insq{#2}}}

%Parentheses

\newcommand{\inb}[1]{\left\{#1\right\}}
\newcommand{\inp}[1]{\left(#1\right)}
\newcommand{\insq}[1]{\left[#1\right]}

%norms
\newcommand{\abs}[1]{\ensuremath{\left|#1\right|}}

\date{}

\usepackage{authblk}

\title{\bf  A Distributed Learning Dynamics in Social Groups}

\author[1]{L. Elisa Celis\thanks{This research was supported in part by an SNF Project Grant (205121\_163385).}}
\author[2]{Peter M. Krafft}
\author[3]{Nisheeth K. Vishnoi}
\affil[1,3]{\small \'{E}cole Polytechnique F\'{e}d\'{e}rale de Lausanne (EPFL), Switzerland}
\affil[2]{\small MIT}

\begin{document}
\maketitle

\begin{abstract}
We study a distributed learning process observed in human groups and  other social animals. 
This learning process appears in settings in which each individual in a group is trying to decide over time, in a distributed manner, which option to select among a shared set of options. 
Specifically, we consider a  stochastic dynamics in a group in which every individual selects an option in the following two-step process: (1) select a random individual and observe the option that individual chose in the previous time step, and (2) adopt that option if its stochastic quality was good at that time step. 
Various instantiations of such distributed learning appear in nature, and have also been studied in the social science literature.
From the perspective of an individual, an attractive feature of this learning process is that it is a simple heuristic that requires extremely limited computational capacities. 
But what does it mean for the group -- {\em could such a simple, distributed and essentially memoryless process lead the group as a whole to perform optimally?} 
We show that the answer to this question is {\em yes} --  this distributed learning is highly effective at identifying the best option and is close to optimal for the group overall. 
Our analysis also gives  quantitative bounds that show fast convergence of these stochastic dynamics. 
We prove our result by first defining a (stochastic) infinite population version of these distributed learning dynamics and then combining its strong convergence properties along with its relation to the finite population dynamics. 
Prior to our work the only theoretical work related to  such learning dynamics has been  either in  deterministic special cases or in the asymptotic setting.
Finally, we observe that our infinite population dynamics is a stochastic variant of the classic multiplicative weights update (MWU) method. Consequently, we arrive at the following interesting converse:  the learning dynamics on a finite population considered here can be viewed as a novel distributed and low-memory implementation of the classic MWU method.
\end{abstract}

\newpage

\section{Introduction}
A powerful assumption -- often leveraged in biology, ecology, and
evolutionary psychology in order to reason about why humans, animals,
and other biological organisms behave in certain ways -- is to suppose
that behavior is tuned to alleviate evolutionary pressure. 
 This assumption provides deductive power because once a behavior is assumed
to be optimally or near-optimally solving \emph{some} problem, one can then attempt to discover \emph{which} problem the system might be solving through the {\em computational lens}.
We study this phenomenon in the context of 
social behavior.
In particular, we consider a class of distributed {social learning} dynamics  which are at once
conspicuous in daily life, oft discussed in the social science
literature, and also  empirically verified; yet are also simple to the point that they appear perhaps suboptimal.  
Consider the setting consisting of a social group of $N$ individuals presented with a set of $m$ options of different quality. 
The quality of each option is assumed to be an independent random variable whose parameters are unknown to the individuals and remain fixed over time.
 At each  time step each individual selects one option and observes a stochastic indicator of that option's quality. 
The goal of the {\em individual} is to identify the best option.

The distributed learning dynamics then boils down to  {\em individuals copying  or imitating the behavior of others} in an effort to solve this problem. 
Such dynamics roughly have the following two steps: At each time step, each individual independently decides which option to select by first 
\begin{enumerate}
\item {\bf Sampling --}  observing the \ choice of a random member of the group at the last time step, and then  
\item {\bf Adopting --} deciding whether or not to adopt the recommended option as a function of the {\em most recent} (stochastic) signal of that option's quality. 
\end{enumerate}
%, 
Instances of such two-stage distributed learning dynamics in social settings have been widely proposed and validated with data in the literature on human choice behavior (e.g., \cite{bianconi_bose_2001, krumme_quantifying_2012, mcelreath2008beyond, beheim_strategic_2014, granovskiy2015integration}) and animal behavior (e.g., \cite{pratt_agent-based_2005,seeley_group_1999}).
They are cognitively simple because individuals need not maintain any history of previous observations; rather they only use the most recent quality signal of one option. 
Furthermore, as with many other distributed protocols observed in nature (e.g.,  \cite{GhaffariMRL15, MuscoSL16}), each step requires only limited communication with other group members. 
In light of the fact that such  distributed learning processes are ubiquitous, 
the question arises -- why? 
To answer this, we need to understand the following:  
 {\em What does such a distributed learning dynamics imply for the  group as a whole?}

\subsection*{Our Contribution.}
We consider a general model that captures a wide variety of distributed  learning dynamics in social settings as defined above (see Section~\ref{sec:model} for a formal definition) and study two fundamental algorithmic questions:
\begin{itemize}
\item \emph{do such  learning dynamics (despite each individual having a limited memory) have the potential to successfully converge to the best option for the collective population?} and if so, 
\item  \emph{how efficient are such dynamics?}
\end{itemize}
We prove that the answer to the first  question is {\em yes} and provide quantitative bounds for the second in the most general 
case  in which the population is finite and both the sampling and adopting step are stochastic.

In general, due to the fact that populations are finite and there is stochasticity in both steps, the social learning dynamics may be chaotic, with no single option dominating, and the popularity of options rising and falling over time.
However, we prove that social learning  leads the population, as a whole, to be competitive -- pretty quickly -- as compared to the best strategy in {\em hindsight}; i.e., the dynamics have low {\em regret} (see \cref{sec:model}).
Prior to our work, many instances of  such social learning dynamics have been proposed and validated  (see, e.g., \cite{pentland2014social} for an exposition), and, even though on the surface there seem to be several related processes, the only theoretical work has been either in deterministic special cases or in the asymptotic setting where both the size of the population and time goes to infinity; such results (see  \cite{ellison1995word, bjornerstedt1994nash} and Section \ref{sec:relatedwork}) effectively focus on deriving asymptotic (large deviation) bounds. 
 To the best of our knowledge, ours is the first rigorous analysis of this distributed learning dynamics in a realistic setting when the size of the group is finite.

Key to our results are the following two  realizations 
which we can use to understand the emergent  behavior of the distributed learning dynamics:
\begin{itemize}
\item  In the infinite population limit, by rewriting the underlying stochastic equations of the distributed learning dynamics, the individuals are effectively implementing a {\em stochastic}  variant of the classic multiplicative weights update (MWU) method \cite{arora2012multiplicative} at the group-level.\footnote{This stochastic process is not to be mistaken with the standard {deterministic} MWU or its continuous time limit, the replicator dynamics.}

\item While the finite population distributed learning dynamics can be approximated with its infinite population limit for {\em short} times, to ensure that the regret bounds remain valid for longer times, further new ideas are required. Here, we show how we can appeal to the strong {convergence} properties of the infinite population stochastic dynamics to ascertain that the regret  remains bounded for all times in the finite population case. 
\end{itemize}

Computationally, contrasting with typical implementations of the MWU method, in the  learning dynamics we consider, no  individual keeps track of the weights. 
Rather, the popularity of the options in the previous time step serve as a proxy for weights, and suffice to propel the process forward.
Thus, we arrive at the following interesting conclusion -- {\em the learning dynamics in social groups considered here can inform novel, low-memory, low-communication, distributed implementations of the MWU algorithm in the stochastic setting}; perhaps appropriate for  low-power devices in distributed settings 
such as sensor networks or the internet-of-things. 

\section{Model,  Our Results and Overview}

\subsection{The Model}
\label{sec:model}

The learning environment we consider consists of a set of $N$ individuals repeatedly choosing among $m$ options during a sequence of $T$ discrete time
steps.  
Each option has an unknown underlying quality, $\eta_1 \geq \eta_2 \geq \cdots \geq \eta_m \in [0,1]$, which 
represents the probability that the option is ``good'' at any given time step; we let %
$R_j^t = \mathrm{Bernoulli}(\eta_j)$ be the indicator random variable for the event that option $j$ is good at time $t$. 
The goal of each individual is to select the best option. 
Now, let $X_{ij}^t \in \{0,1\}$ be the indicator random variable for the event that individual $i$ chooses to adopt option $j$ at time $t$. 
The distributed learning dynamics we consider is a two-stage model: 

\paragraph{\bf (1) Sampling.} First, individuals select which option to consider, and then they choose whether or not to adopt that option.  
To obtain an option to consider at time $t+1$, with probability $\mu$ individual $i$ selects an option $j \in [m]$ to consider uniformly at random, and with probability $(1-\mu)$ individual $i$ selects an option $j \in [m]$ proportional to its current popularity: 
$$Q_{j}^t = \frac{\sum_{i=1}^N X_{ij}^t}{\sum_{k=1}^m \sum_{i=1}^N X_{ik}^t}$$
is the fraction of the population that adopts option $j$ at time $t$, and we assume $Q_j^0 = \frac 1 m$ for all $j$.\footnote{In the absence of prior knowledge, we initialize at the point where all options are equally popular. This also simplifies the exposition, but is not crucial to our results -- our results hold from arbitrary initial conditions.} 
Note that this can be implemented in a distributed manner as suggested in the introduction by letting $i$ select a companion $i' \in \{1,\ldots,N\}$ uniformly at random, and observing the choice of individual $i'$ at time $t$.
The parameter $\mu > 0$ is small and represents a fraction of the population which may take independent decisions; its role  is to ensure that the population does not get stuck in a bad option.
% and can be compared to the role of mutation in evolution.

\paragraph{\bf (2) Adopting.} In the second stage, after choosing an option $j$ to consider, individual $i$ must then decide whether to commit to this option or to sit out during this time step.
Individual $i$  observes the most recent quality signal associated with $j$, namely $R_{j}^{t+1}$  and decides to adopt the option with probability $f_i(R_j^{t+1}) \in \{0,1\}$ where $f_i$ is a stochastic function such that $\E[f_i(1)] > \E[f_i(0)]$. 
Hence, for each $i$, we can express $f_i$ in the following form:  
\[
f_i(R_j^{t+1}) = \left\{
\begin{tabular}{ll}
1 & {with probability $\beta_i$  if $R_j^{t+1} = 1$} \\
1 & {with probability $\alpha_i$ if  $R_j^{t+1} = 0$} \\
0 & otherwise.
\end{tabular}
\right.
\]
Here, $\alpha_i \leq \beta_i$ are parameters of the model and represent how sensitive individuals are to the most recent signal of goodness as compared to the weight they give to the recommendation.
For simplicity in the exposition, we assume that all $f_i$ are identical, and drop the index $i$. This assumption is not essential for our results 
-- we omit the details.
Thus, the two relevant parameters are $0 \leq \alpha \leq \beta \leq 1$.

\paragraph{\bf Examples of the model.}

Many instances of social learning in the social sciences and economics literature can be interpreted as special cases of the distributed learning dynamics introduced above. Here we give two concrete examples -- one direct and one indirect; see more discussion in Section \ref{sec:relatedwork}.
The simplest such  example \cite{krafft2016human}  corresponds exactly to our model when $\alpha = 1-\beta$ for some $\beta \geq \frac 1 2$ when $\eta_1 > \frac 1 2 = \eta_2 = \cdots = \eta_m$. The authors validate this model using observational data on the decisions of amateur investors on an online platform in which users are able to copy the actions of others. 

Another instance, which takes a bit of explanation, appears in  the economics literature \cite{ellison1995word}. 
We present it here because it illustrates two common ways in which more general-looking models, specifically ones with continuous-valued rewards and reward differences across individuals, can often be reinterpreted in such a way that our framework applies. 
The authors consider a learning setting where $m = 2$ and rewards $r_j^t $ are drawn from a continuous-valued distribution $\mathcal F_j$.
Furthermore, their model incorporates player-specific stochastic shocks, so that if  $\epsilon_{ij}^t \sim \mathcal G$ is the size of the shock to player $i$ on option $j$ at time $t$, then the reward to player $i$ is $r_j^t + \epsilon_{ij}^t$.
 The sampling step (1) is similar, except that $\mu = 0$. 
In the adoption step (2), if player $i$ sampled player $i'$, then player $i$ adopts option $1$ if $r_1^t + \epsilon_{i1}^t + \epsilon_{i'1}^t > r_2^t + \epsilon_{i2}^t + \epsilon_{i'2}^t$, and adopts option $2$ otherwise.  
To convert this to our setting, let $R_1^t$  (and $R_2^t$) be the indicator random variable for the event that $r_1^t > r_2^t$ (and $r_1^t < r_2^t$); this occurs with some probability $p$ (and $1-p$) and defines our parameters $\eta_1 = p > 1-p = \eta_2$.\footnote{Note that in their model the $R_1^t$ and $R_2^t$ are correlated as exactly one of them is 1 in every time step; however independence across $t$ remains which suffices for our results.}
Note that in the adoption step, we can replace $\epsilon_{i1}^t + \epsilon_{i'1}^t - \epsilon_{i2}^t - \epsilon_{i'2}^t$ by a continuous random variable $\xi$.
Then $\beta =\Pr{\xi > r_2^t - r_1^t | r_1^t > r_2^t}$ and $\alpha =\Pr{\xi > r_2^t - r_1^t | r_2^t > r_1^t}$. 
As the $\epsilon_{ij}^t$s are i.i.d., $\xi$ has zero mean and is symmetric, hence $\alpha < \beta$ and our results apply.
The authors consider the  infinite population version of this model where a constant fraction of the population updates at each time step,  and analyze its asymptotic properties as $T \to \infty$.

%%%%%%%

\subsection{Our Results and Overview}

In order to explain our results for the distributed learning dynamics we first need to quantify a measure of optimality. 
In the remainder of the paper we  let $\alpha=1-\beta$. This just simplifies the reading and has no impact on the statements of the theorem -- the same bounds hold with a dependence on $\frac{\beta}{\alpha}$ as opposed to $\frac{\beta}{1-\beta}.$
Let $Q_j^t$ be the random variable which measures the fraction of individuals that chose option $j$ at time $t$, as defined above. 
Wishfully, we might like to prove that as $t$ becomes large, $Q_j^t$s, for all $j\neq1$ are close to zero (assuming $\eta_1>\eta_2$).
However, simple examples show that this may not be the case; the stochastic process is non-monotone, even when there is a significant gap between $\eta_1$ and $\eta_2$, and may step away significantly from $Q_1^t \approx 1$ even for large $t$. 
Instead, we consider the average expected performance of the group when compared to that of the best option: 
 \[ \mbox{Regret}_{N}(T) :=  \eta_1 - \frac 1 T \sum_{t=1}^T \sum_{j=1}^m \E[Q_j^{t-1}R_j^t]. \]
As the name suggests, this is nothing but the average regret of this process; namely, the difference between the group's expected average reward if all individuals who adopted an option select the optimal $j = 1$, and the group's expected average reward selected according to the distributed learning process up to time $T$. 
The following is the main technical contribution of the paper;  see \cref{thm:main2} for a formal statement of this result.

\emph{\textbf{Distributed learning achieves near-optimal regret in a social group.}
For a range of parameters $\nicefrac{1}{2} \leq \beta \leq \nicefrac{e}{(e+1)}$,  $\mu$ a small constant, $N$ roughly at least  $m^{\nicefrac{1}{\delta^2}}$ and for all $T \geq \frac{\ln m}{\delta^2}$,  $\mbox{Regret}_{N}(T)$ is at most   $6\delta$ where $\delta = \ln\inp{\frac{\beta}{1-\beta}}$.}
 Thus, the closer $\beta$ is to $\nicefrac 1 2$, the better the regret.

The proof of the above result relies on the following  connection between our distributed learning dynamics in a finite population and what can be thought of as an infinite population variant of the distributed learning dynamics.
The latter can also be seen as a stochastic variant of the MWU method (see \cref{lem:main}) which we explain below.
Consider  $m$ experts where expert $j$ generates a stochastic reward $R_j^{t+1}$ at time $t$ that is $1$ with probability $\eta_j$ and $0$ otherwise. In this setting, a {\em single} player maintains weight $W_j^t$ for option $j$ at time $t$, which is updated multiplicatively in the following manner:\footnote{It is worth noticing the similarity between the weights  update (in particular the first term) and that used in the  result of Christiano {\em et al.} \cite{ChristianoKMST11} who developed a variant of the MWU method in the design of a fast algorithm for a flow problem.}

$$W_j^{t+1} := \underbrace{\left( (1-\mu) W_j^t + \frac{\mu}{m}  \sum_{k=1}^m W_k^t \right)}_{\mathrm{deterministic \; sampling}} \; \underbrace{\beta^{R_j^{t+1}}(1-\beta)^{1-R_j^{t+1}}}_{\mathrm{stochastic \; rewards}}$$

and $W_j^0 = 1$ for all $j$. 
We arrive at this update equation by the (non-rigorous) thought process that in the infinite population case,  we can replace the stochastic quantities by their expectation in the sampling stage: 
in this case, the expected fraction of individuals picking the option $j$ in the sampling stage of the social learning dynamics is proportional to $(1-\mu)W_j^t+ \frac{\mu}{m}  \sum_{k=1}^m W_k^t $.
This gives us the first term in the right-hand side above. The second term is just $f(R_j^{t+1})$ with respect to parameter $\beta.$ 
We note that, since the rewards are stochastic, it is not the standard adversarial MWU setting and, since all the information is known to the population as a whole, it is also not the standard setting of stochastic bandits.

Even though these weight update equations are arrived at by a heuristic calculation, we can prove that for {\em short} time periods, there is a coupling between the infinite and the finite distributed learning dynamics such that the stochastic trajectories corresponding to the weights in the infinite population case remain close to that of the finite population case. 

\emph{\textbf{Infinite vs. finite population distributed social learning.} 
If $(P_j^t)_{j=1}^m$ is the probability distribution induced by the weights $W_j^t$ after time $t$ for a given sequence of rewards, then for all $j$, $1- \frac{5^t}{\sqrt{N}} \leq  \nicefrac{P_j^t}{Q_j^t} \leq 1+ \frac{5^t}{\sqrt{N}}$.}

The proof of this crucially relies on the fact that $\mu$ is strictly positive.
On the other hand, the fact that $\mu>0$ also makes the analysis quite messy.
Note that the closeness deteriorates very quickly and, in particular, the bound becomes uninteresting after about $\log N$ time steps.
On the other hand, for fixed $t$, as $N \rightarrow \infty$,  the trajectories of the two processes are identical; put another way, the distributed learning process over infinite populations is identical to the corresponding stochastic MWU method.
This is typically the kind of asymptotic result that exists in the literature.
The more interesting and challenging direction is to show convergence when $N$ is large, but {\em fixed}, and $T$ goes to infinity.

In order to leverage this connection between infinite and finite population variants of the distributed learning dynamics, we first analyze the corresponding regret of the stochastic MWU method, which is defined to be
\[ \mbox{Regret}_{\infty}(T):=  \eta_1 - \frac 1 T \sum_{t=1}^T \sum_{j=1}^m \E[P_j^{t-1}R_j^t]. \] 
We can establish the following regret bound in this case;  see \cref{thm:main1}. 

\emph{\textbf{Infinite population distributed social learning dynamics achieves near-optimal regret.}
For a range of parameters $\nicefrac{1}{2} \leq \beta \leq \nicefrac{e}{(e+1)}$,  $\mu$ a small constant,  and for all $T \geq \frac{\ln m}{\delta^2}$,  $\mbox{Regret}_{\infty}(T)$ is at most $3\delta$ where $\delta = \ln\inp{\frac{\beta}{1-\beta}}$.}

The proof of this obtained by adapting the  proof of the MWU method to take into account stochastic rewards. 

At this point we would be done except that the closeness between the probability distributions of finite and infinite distributed learning dynamics holds only for short times -- if we run the process for about $\frac{\ln m}{\delta^2}$ steps, then for the probability distributions to be close we would need $N \geq m^{O(\nicefrac{1}{\delta^2})}.$
What about when $T \geq \frac{\ln m}{\delta^2}$?
To tackle this problem, we need a new idea.
The first observation is that, in fact, the regret bound for the infinite population case can be made stronger: roughly, {\em as long as the starting distribution has enough entropy, the regret becomes small in about the same number of steps.}
Thus,  we can break $T$ into epochs of size approximately $\frac{\ln m}{\delta^2}$
and observe that at the beginning of each epoch,  each option will have about $\nicefrac{\mu}{m}$ probability -- again we need crucially that $\mu>0$.
Thus, we can show convergence starting from such a probability distribution in $\frac{\ln (\nicefrac{m}{\mu})}{\delta^2}$ iterations.
As a consequence, in each epoch, we can bound the regret by about $6 \delta $ for slightly larger $N$.
Rewriting it a different way, having a finite population will have an additive error term of approximately $\frac{m^{\nicefrac 1 {\delta^2}}}{\sqrt N}$ to the regret obtained by the infinite population. 
Thus, we rely on the strong {\em attractive} properties of the infinite population stochastic dynamics to obtain quantitative regret bounds for the finite population social learning dynamics.
This contributes to the growing set of connections between  using attractors of dynamical systems to analyze stochastic processes\cite{pemantle1991touchpoints,wormald1995differential,BenaimHS05,VishnoiSOE}.
% %
Details of the proof of \cref{thm:main2} appear in Section \ref{sec:main}. 

\section{Related Work}
\label{sec:relatedwork}
There is a growing body of work in theoretical computer science, and distributed computing theory in particular, that studies problems arising in the sciences through the computational lens; see, e.g., \cite{Chazelle,Afek183,MuscoSL16,CornejoDLN14,GhaffariMRL15,straszak2016IRLS}.
Such studies have also on occasion contributed back to computer science by providing insights into existing techniques, or giving rise to novel bio-inspired algorithms. 
Our work touches upon both of these aspects. 
Among related studies in the sciences, while dynamics that \emph{only} have a sampling step \cite{boyd1988culture, estlund1994opinion} or \emph{only} an adoption step \cite{henrich2001cultural} have been studied, combining both has been shown empirically to result in a better strategy \cite{mcelreath2008beyond}. 
Indeed, in line with  these observations, one can formally see from our analysis that if we only have sampling ($\beta = 1-\alpha = 1$) or only have adoption ($\mu = 1$), the process does not always converge to the best option.
Hence, both steps of the process seem crucial, and many models in sociology and economics are such distributed two-step processes \cite{krafft2016human, rogers1988does, banerjee1992simple, ellison1995word, cabrales2000stochastic}.
While some models a priori look different, many can be captured by our formulation; for example, models that have continuous rewards but whose adoption rule depends on whether the reward is above or below a threshold  \cite{bjornerstedt1994nash, gale1995learning, binmore1997muddling} can be converted to a binary reward structure in a standard way. Similarly, differences across individuals can be captured in the functions $f_i$ (see the second example in Section~\ref{sec:model}). 
While some of these models consider the aggregate popularity of options over time, many (including models for human behavior, e.g., \cite{krafft2016human}) consider only the current popularity.
In the economics literature, some similar finite population models have been studied, many of which also fall into our framework. 
Their analysis has only been \emph{asymptotic} as $N, T \to \infty$; such results (see, e.g., \cite{ellison1995word, bjornerstedt1994nash, benaim2003deterministic}) effectively focus on deriving large deviation bounds. 
In contrast, the main technical contribution of our work includes quantitative bounds of the social learning dynamics for finite populations ($N < \infty$).  
Asymptotic and infinite-population results follow as corollaries.

There has been a large body of work on the distributed consensus problem; see for instance \cite{AngluinADFP06,Becchetti:2016} and the references therein. 
The goal in such problems is for all individuals to agree on a single opinion, and various distributed dynamics for doing so have been proposed and analyzed.  
Our setting differs in that there is additional information -- the repeated stochastic signals associated to the quality of each opinion.

In evolutionary game theory, similar-looking deterministic {imitator} dynamics have been considered (see \cite{sigmund2011evolutionary,nowakbook} for an overview). 
A key difference with this work is the learning environment -- we are not attempting to select a strategy in a game, rather are trying to identify the best option of out a collection. 
Hence, the reward of an individual depends on her choice $j$, while in evolutionary game theory the reward of an individual depends on the choices of \emph{the entire population}. In this setting one can still consider the regret of an individual (see, e.g., \cite{blum2008regret}). 
While fast convergence of similar dynamics has been shown for some special cases (e.g., potential games \cite{duersch2011once} and selfish routing \cite{fischer2004evolution}), for general games we cannot expect to always converge quickly unless $PPAD \subseteq P$.  
In fact, it has been shown that in some games, versions of the replicator dynamics may converge to outcomes that are not optimal, or not even equilibria \cite{gale1995learning, samuelson1992evolutionary}. 

The fact that an MWU-like method emerges from a simple distributed behavior of individuals in a social setting, is somewhat reminiscent of unrelated results  that arise  in the context of biological evolution \cite{chastain2014algorithms} and task allocation among ants \cite{su2015algorithms}.
The MWU algorithm \cite{arora2012multiplicative}, or its well-known continuous time limit, the replicator dynamics \cite{taylor1979evolutionarily, smith1982evolution} can also be seen as a special case of our distributed learning dynamics if we remove the randomness from {\em both} the sampling and adopting steps \emph{and} the rewards (effectively taking the process to a deterministic and infinite setting).  
However, as-is, MWU-type dynamics are different because each individual must effectively maintain full weights (or a mixed strategy vector) at every time step. 
Furthermore, the lack of stochasticity and the infinite population setting avoid the key technical hurdles in the analysis of our model. 

Our results suggest that the distributed learning dynamics in finite populations can be viewed as a novel distributed and approximate implementation of the MWU method.
While parallelized implementations for solving multi-armed bandit problem exist (see, e.g., \cite{golovin2010online,nayyar2015regret,liu2010decentralized,gai2011decentralized,anandkumar2011distributed}), in such works each node explicitly maintains a weight vector on all options. 
The most distinctive aspect of the distributed MWU interpretation of the learning dynamics we consider is that no such memory is required -- the weights are represented implicitly by the popularity of the various options, and the sampling and adopting processes require almost no memory. 
This difference distinguishes our distributed learning dynamics from prior work on distributed MWU or bandit methods.  

\section{Technical Details and Proofs}
\subsection{Basic Facts}

We first recall a few theorems and definitions that will be useful in our proofs.

\begin{theorem}[\textbf{Chernoff-Hoeffding bounds~\cite{dubhashi2009concentration}}]
  \label{thm:chernoff}
  Let $Z_1, Z_2, \dots, Z_n$ be independent Bernoulli random variables with
  mean $\gamma_i$.  Let $\gamma:=\frac{1}{n} \sum_{i=1}^n \gamma_i.$ When $0 < \delta \leq 1$, we have
$      \Pr{\abs{\frac{1}{n}\sum_{i=1}^n Z_i - \gamma} > \gamma\delta}
      \leq 2\exp\inp{-\nicefrac{n\gamma\delta^2}{3}}.
  $
 \end{theorem}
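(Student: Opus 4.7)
The plan is to prove this via the classical Cramér--Chernoff (exponential moment) method and then combine an upper and lower deviation bound by a union bound. Let $S := \sum_{i=1}^n Z_i$, so that $\E[S] = n\gamma$. I would bound the upper tail $\Pr{S > n\gamma(1+\delta)}$ and the lower tail $\Pr{S < n\gamma(1-\delta)}$ separately and each by $\exp\inp{-n\gamma\delta^2/3}$; the statement then follows after multiplying by $2$.

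For the upper tail, for any $t>0$, I would apply Markov's inequality to $e^{tS}$:
\[
\Pr{S > n\gamma(1+\delta)} \;\leq\; e^{-tn\gamma(1+\delta)}\,\E\insq{e^{tS}}.
\]
By independence, $\E\insq{e^{tS}} = \prod_{i=1}^n \E\insq{e^{tZ_i}} = \prod_{i=1}^n \inp{1+\gamma_i(e^t-1)} \leq \prod_{i=1}^n e^{\gamma_i(e^t-1)} = e^{n\gamma(e^t-1)}$, where the inequality uses $1+x \leq e^x$. Choosing the optimal $t = \ln(1+\delta)$ yields the familiar bound
\[
\Pr{S > n\gamma(1+\delta)} \;\leq\; \inp{\frac{e^\delta}{(1+\delta)^{1+\delta}}}^{n\gamma}.
\]
The main analytic step — and the only place any real work is needed — is then the elementary inequality $\frac{e^\delta}{(1+\delta)^{1+\delta}} \leq e^{-\delta^2/3}$ valid for $0<\delta\leq 1$. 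I would verify this by showing $g(\delta) := (1+\delta)\ln(1+\delta) - \delta - \delta^2/3 \geq 0$ on $[0,1]$: check $g(0)=0$ and examine $g'$ (or Taylor expand $(1+\delta)\ln(1+\delta)$) to confirm monotonicity in this range.

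For the lower tail I would repeat the argument with $t<0$, writing the exponential Markov bound as $\Pr{-S > -n\gamma(1-\delta)} \leq e^{tn\gamma(1-\delta)}\E\insq{e^{-tS}}$ and optimizing to obtain $\inp{\frac{e^{-\delta}}{(1-\delta)^{1-\delta}}}^{n\gamma}$, which is bounded by $e^{-n\gamma\delta^2/2} \leq e^{-n\gamma\delta^2/3}$ on $(0,1]$ by an analogous Taylor-expansion check. A union bound over the two tails then gives the factor of $2$ in the statement. I do not expect any serious obstacle: the result is standard and the only computation that requires care is the elementary inequality on the rate function, whose proof is a single-variable calculus exercise on $[0,1]$.
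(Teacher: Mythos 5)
Your proposal is correct: the exponential-moment (Cram\'er--Chernoff) argument, the optimization at $t=\ln(1+\delta)$, the rate-function inequalities $\frac{e^\delta}{(1+\delta)^{1+\delta}}\leq e^{-\delta^2/3}$ and $\frac{e^{-\delta}}{(1-\delta)^{1-\delta}}\leq e^{-\delta^2/2}$ on $(0,1]$, and the final union bound all check out. Note that the paper does not prove this theorem at all --- it is imported verbatim from the cited reference --- so there is no in-paper argument to compare against; your derivation is the standard one and would serve as a self-contained proof if one were wanted.
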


\begin{definition}
For real numbers $A$, $B$, and $c \geq 0$, the notation  
$A \stackrel{c}{\sim} B$ denotes
$  \frac{1}{c} \leq \frac{A}{B} \leq c.$ 
\end{definition} 

\begin{fact}\label{fct:exp}
For fixed $0 \leq \delta \leq 1$ and for all $0 \leq x \leq 1$, 
$ e^{\delta x} \leq 1+ (e^\delta -1) x.$
\end{fact}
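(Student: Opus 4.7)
The plan is to prove this by convexity of the exponential. I would first observe that the function $g(x) := e^{\delta x}$ is convex on $\mathbb{R}$, since its second derivative $\delta^2 e^{\delta x}$ is nonnegative. Then for any $x \in [0,1]$, writing $x = (1-x)\cdot 0 + x\cdot 1$ as a convex combination of the endpoints, the convexity of $g$ immediately yields
\[
e^{\delta x} = g(x) \;\leq\; (1-x)\, g(0) + x\, g(1) \;=\; (1-x) + x e^{\delta} \;=\; 1 + (e^{\delta} - 1)\,x,
\]
which is exactly the inequality. Note that the hypothesis $0\leq\delta\leq 1$ is not actually needed for convexity — the bound holds for all $\delta\in\mathbb R$ — but the stated range is the regime of interest for the application.

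Alternatively, if one prefers a proof that does not invoke convexity as a black box, I would define $h(x) := 1 + (e^{\delta}-1)x - e^{\delta x}$ and check that $h(0) = h(1) = 0$, while $h''(x) = -\delta^2 e^{\delta x} \leq 0$, so $h$ is concave on $[0,1]$. A concave function that vanishes at both endpoints of an interval is nonnegative on the interval, which gives $h(x) \geq 0$, i.e., the desired inequality.

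There is no real obstacle here: both routes are one-liners. The only thing worth being careful about is that the bound is tight at $x=0$ and $x=1$ (which is why the linear interpolation appearing on the right-hand side is the sharp form), and that the direction of the inequality flips if one tries to use the Taylor expansion $e^{\delta x} \geq 1 + \delta x$ instead — so the proof must go through convexity/concavity rather than through a first-order Taylor bound.
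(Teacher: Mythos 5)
Your proof is correct, and since the paper states this as a \emph{Fact} without supplying any proof, the convexity argument you give (bounding $e^{\delta x}$ by the chord through $(0,1)$ and $(1,e^{\delta})$) is exactly the standard justification the authors are implicitly relying on. Your observation that the hypothesis $0\leq\delta\leq 1$ is not needed for the inequality itself is also accurate.
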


\subsection{The Infinite Population Distributed Learning Dynamics}
Consider the following stochastic process: 
$W_j^0:=1$ for all $1\leq j \leq m$.
For $t > 0$ and for all $1 \leq j \leq m$
\begin{equation}
W_j^{t+1} :=  \left( (1-\mu) W_j^t + \frac{\mu}{m}  \sum_{k=1}^m W_k^t \right) \; \beta^{R_j^{t+1}} (1-\beta)^{1-R_j^{t+1}}.
\end{equation}
This definition parallels the two-step finite population distributed learning dynamics and can be thought of as an infinite population distributed learning dynamics: $W_j^{t+1}$ is first updated according to $W_j^t$ (with weight $1-\mu$) and with uniform additive factor $\nicefrac{\sum_{k=1}^m W_k^t}{m}$ (with weight $\mu$), and then (stochastically) as the corresponding function of $\beta$ and $R_j^{t+1}$.
The stochasticity is now solely with respect to the $R_j^t$s.
Now, consider the following probability distribution corresponding to these weights,
$P_j^t := \frac{W_j^t}{\sum_{k=1}^m W_k^t},$ which corresponds to the fraction of the (infinite) population that has adopted option $j$ at time $t$.

Let $\delta := \ln  \inp{\frac{\beta}{1-\beta}}$, and note that we can re-write the above as
$$W_j^{t+1} :=  (1-\beta)\left( (1-\mu) W_j^t + \frac{\mu}{m}  \sum_{k=1}^m W_k^t \right) \; e^{\delta R_j^{t+1}}.$$
This now takes a more standard form as a multiplicative update; the $(1-\beta)$ term can be ignored as it is cancels out in $P_j^t$, so the main difference with the standard MWU is the $\frac{\sum_{k=1}^m W_k^t}{m}$ term that takes up a $\mu$ fraction of the weight.
One can think of this as a regularizing term and bears some similarity to what was considered in the recent breakthrough result on computing flows \cite{ChristianoKMST11}.

Assuming that $\eta_1\geq \eta_j$ for all $j \neq 1$, the optimal strategy for the population is to select option $1$.
If they did this, then the average expected gain of the population is 
$ \eta_1.$
On the other hand, the average expected gain of the population over $T$ iterations while following this stochastic process  is
$ \frac 1 T \sum_{t=1}^T \sum_{j=1}^m \E\left[  P_j^{t-1} R_j^{t} \right].$
We now proceed to understand how the latter compares to the former -- in other words, bound the regret of the infinite population stochastic process.
Formally, in the remainder of this subsection, we prove the following result:
\begin{theorem}[Regret of Infinite Population Distributed Learning Dynamics]\label{thm:main1}
Let $\eta_1>\eta_2 \geq \cdots \geq \eta_m$, let $\frac{1}{2} < \beta \leq \frac{e}{e+1}$ (and hence $0 < \delta \leq 1$), and let $6\mu \leq \delta^2$.
Let $P^0$ be the uniform distribution on $\{1,\ldots,m\}$ and $\{P^t\}_{t=1}^T$ be the probability distributions produced by the infinite population distributed learning dynamics with stochastic rewards $\{R^t\}_{t=1}^T$. 
Then for $T \geq \frac{\ln m}{\delta^2},$ the average expected regret after $T$ steps is 
\[ \mathrm{Regret}_{\infty}(T) = \eta_1 - \frac{1}{T} \cdot  \sum_{t=1}^T \sum_{j=1}^m \E\left[  P_j^{t-1} R_j^{t} \right] 
\leq 3\delta. \]
Furthermore, 
$ \frac{1}{T} \sum_{t=1}^T  \E \left[ P_1^{t-1} \right] \geq  1- \frac{3 \delta}{\eta_1-\eta_2}.$
\end{theorem}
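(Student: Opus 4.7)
The plan is to run a classical multiplicative-weights potential argument, adapted to the stochastic-reward setting and to the $\mu$-regularization. Define the potential $\Phi^t := \sum_{j=1}^m W_j^t$, and the ``mixed'' distribution $\tilde P_j^t := (1-\mu) P_j^t + \nicefrac{\mu}{m}$, so that the rewritten update
\[
W_j^{t+1} = (1-\beta)\left((1-\mu) W_j^t + \tfrac{\mu}{m}\Phi^t\right) e^{\delta R_j^{t+1}}
\]
gives the telescoping identity $\Phi^{t+1} = (1-\beta)\,\Phi^t \sum_{j=1}^m \tilde P_j^t\, e^{\delta R_j^{t+1}}$. The game is then to squeeze $\Phi^T$ between an upper bound coming from summing over all arms and a lower bound coming from the best arm.

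\textbf{Upper bound on $\Phi^T$.} Since $R_j^{t+1}\in\{0,1\}$, Fact~\ref{fct:exp} gives $e^{\delta R_j^{t+1}}\le 1+(e^\delta-1)R_j^{t+1}$, and then $1+x\le e^x$ yields
\[
\Phi^{t+1} \le (1-\beta)\, \Phi^t \exp\!\left((e^\delta-1)\sum_{j=1}^m \tilde P_j^t R_j^{t+1}\right).
\]
Iterating from $\Phi^0=m$ gives a clean exponential upper bound on $\Phi^T$.

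\textbf{Lower bound on $\Phi^T$.} Since $\Phi^T \ge W_1^T$, I would drop the uniform $\nicefrac{\mu}{m}\Phi^t$ regularizer from the recursion for $W_1^{t+1}$ (it only helps), giving $W_1^{t+1} \ge (1-\beta)(1-\mu) W_1^t\, e^{\delta R_1^{t+1}}$, hence $W_1^T \ge (1-\beta)^T(1-\mu)^T e^{\delta \sum_{t=1}^T R_1^t}$. Chaining this with the upper bound, cancelling $(1-\beta)^T$, and taking logs yields
\[
T\ln(1-\mu) + \delta \sum_{t=1}^T R_1^t \;\le\; \ln m + (e^\delta-1)\sum_{t=1}^T\sum_{j=1}^m \tilde P_j^{t-1}R_j^t.
\]

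\textbf{Taking expectations and reducing to $M_T$.} Since $P_j^{t-1}$ is measurable in $R_1^1,\dots,R_m^{t-1}$ and hence independent of $R_j^t$, we get $\E[\tilde P_j^{t-1} R_j^t] = \eta_j\,\E[\tilde P_j^{t-1}]$ and $\E[R_1^t]=\eta_1$. Writing $M_T := \frac{1}{T}\sum_{t=1}^T\sum_j \E[P_j^{t-1}R_j^t]$, the definition of $\tilde P$ gives $\frac{1}{T}\sum_t\sum_j \E[\tilde P_j^{t-1}R_j^t] = (1-\mu)M_T + \frac{\mu}{m}\sum_j \eta_j \le M_T + \mu$. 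Dividing by $T$ and rearranging produces a lower bound on $M_T$ in terms of $\eta_1$, $\mu$, and $\ln m/T$.

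\textbf{Plugging in the hypotheses.} I would finish with the elementary estimates $e^\delta - 1 \le \delta(1+\delta) \le \delta(1+1)$ (so $\tfrac{\delta}{e^\delta-1} \ge 1-\delta$), $-\ln(1-\mu)\le 2\mu$ for $\mu\le\tfrac12$, together with $\eta_1\le 1$, $T\ge \ln m/\delta^2$, and $6\mu\le\delta^2$. Each of the four error contributions, $\eta_1(1-\tfrac{\delta}{e^\delta-1})$, $\frac{-\ln(1-\mu)}{e^\delta-1}$, $\frac{\ln m}{T(e^\delta-1)}$, and $\mu$, falls to at most $\delta$, $\delta/3$, $\delta$, $\delta/6$ respectively; adding gives $\eta_1 - M_T \le 3\delta$. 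For the ``furthermore'' clause, independence also gives $\sum_j \E[P_j^{t-1}R_j^t] = \sum_j\eta_j\E[P_j^{t-1}]$, so $\eta_1 - \sum_j\eta_j\E[P_j^{t-1}] = \sum_{j\ge 2}(\eta_1-\eta_j)\E[P_j^{t-1}] \ge (\eta_1-\eta_2)(1-\E[P_1^{t-1}])$; averaging over $t$ and invoking the regret bound yields $\frac1T\sum_t \E[P_1^{t-1}] \ge 1 - 3\delta/(\eta_1-\eta_2)$.

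\textbf{Where I expect trouble.} The mechanics are a standard MWU analysis; the only real care is in the bookkeeping between the true distribution $P_j^t$ (which appears in the regret) and the mixed $\tilde P_j^t$ (which appears naturally from the potential recursion), and in pinning down the constant $3\delta$ cleanly using the three slack parameters ($6\mu\le\delta^2$, $T\ge\ln m/\delta^2$, $\delta\le 1$) so that the four error terms each contribute at most $\approx \delta$. The $\mu$-regularization is essential because it ensures $\tilde P_j^t\ge\mu/m>0$ for the approximation step in the finite-population argument later, but it is also what forces the additional slack conditions here.
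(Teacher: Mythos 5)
Your proposal is correct and follows essentially the same route as the paper's proof: the same potential $\Phi^t=\sum_j W_j^t$, the same upper bound via $e^{\delta x}\le 1+(e^\delta-1)x$ and $1+x\le e^x$, the same lower bound $\Phi^T\ge W_1^T\ge(1-\beta)^T(1-\mu)^T e^{\delta\sum_t R_1^t}$, and the same use of independence of $P_j^{t-1}$ and $R_j^t$ for both the regret bound and the ``furthermore'' clause. The only difference is bookkeeping (you carry the mixed distribution $\tilde P$ and divide by $e^\delta-1$ at the end, whereas the paper splits off the $\mu(e^\delta-1)$ term early and introduces $\delta'=\frac{(1-\mu)(e^\delta-1)}{1+\mu\delta}$), and your four-term error accounting checks out, summing to $\tfrac{5}{2}\delta\le 3\delta$.
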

The proof of \cref{thm:main1} appears in Section \ref{sec:proof}

\subsection{Main Result: Regret of the Distributed Learning Dynamics in Finite Populations}
\label{sec:main}

\begin{theorem}[Regret of the Distributed Learning Process in Finite Populations]\label{thm:main2}
Let $\eta_1>\eta_2 \geq \cdots \geq \eta_m$, let $\frac{1}{2} < \beta \leq \frac{e}{e+1}$ (and hence $0 < \delta \leq 1$), let $6\mu \leq \delta^2$, and let $\delta'':= \sqrt{ \frac{60 m \ln N}{(1-\beta)\mu N}}$, $c:=\frac{240 m}{(1-\beta) \mu},$ and $N$ is such that
\[ \frac{N}{\ln N} \geq \frac{\inp{c\frac{4m}{\mu(1-\beta)}}^{\frac{2\ln 5}{\delta^2}}}{\delta''^2}  
\mbox{ and } 
N^{10} \geq \frac{24m \ln m}{\mu (1-\beta) \delta^3}.\]
Let $Q^0$ be the uniform distribution on $\{1,\ldots,m\}$ and  $\{Q^t\}_{t=1}^T$ be the probability distributions produced by the finite population distributed learning dynamics with stochastic rewards $\{R^t\}_{t=1}^T$, 
then for $\frac{N^{10}}{m\delta} \geq T \geq \frac{\ln m}{\delta^2},$ the average expected regret after $T$ steps is 
$ \eta_1 - \frac{1}{T} \cdot  \sum_{t=1}^T \sum_{j=1}^m \E\left[  Q_j^{t-1} R_j^{t} \right] \leq 6\delta.$
\end{theorem}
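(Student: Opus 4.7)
The plan is to exploit the connection with the infinite-population stochastic MWU dynamics together with the coupling claim stated in the overview, but since the coupling only controls trajectories for $O(\log N)$ steps, I would decompose the horizon $T$ into epochs of length $L := \frac{\ln(m/\mu)}{\delta^2}$ and restart the coupling at the start of each epoch. The key structural observation is that after one sampling step of the finite dynamics, every option $j$ satisfies $Q_j^t \gtrsim \mu/m$ with high probability: with probability $\mu$ each individual mixes uniformly, and $N$ such independent Bernoulli $(\mu/m)$ draws concentrate around their mean by \cref{thm:chernoff}. So at every epoch boundary the finite-population distribution has enough entropy to serve as a ``warm start'' for a fresh infinite-population run.

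The first step is to strengthen \cref{thm:main1} to an initial-distribution-agnostic form: for any starting distribution $P^0$ with $P_j^0 \geq \mu/m$ for every $j$, the infinite dynamics achieves average regret at most $3\delta$ after $L$ steps. This is a direct modification of the standard MWU potential argument that proves \cref{thm:main1}: tracking $\Phi^t := \sum_j W_j^t$ one obtains $\ln \Phi^T - \ln \Phi^0 \leq (e^\delta - 1)\sum_t \sum_j P_j^{t-1} R_j^t$ via \cref{fct:exp}, while $\ln W_1^T - \ln W_1^0 \geq \delta \sum_t R_1^t$; the lower bound $P_j^0 \geq \mu/m$ only changes the additive slack from $\ln m$ to $\ln(m/\mu)$, which is absorbed by the choice of $L$.

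Second, within a single epoch, I would couple the finite process $(Q^t)$ to an infinite process $(P^t)$ started at $P^{(0)} := Q^{(\text{epoch start})}$ under the \emph{same} stochastic reward sequence $(R^t)$. Because the sampling step of the finite dynamics is a sum of $N$ i.i.d.\ Bernoulli draws whose mean is exactly the infinite-population sampling probability, and because the $\mu$-regularizer forces this mean to be at least $\mu/m$, \cref{thm:chernoff} yields $Q_j^{t+1} \stackrel{1+O(\sqrt{m\ln N/(\mu N)})}{\sim} P_j^{t+1}$ on a high-probability event. Iterating the multiplicative error across at most $L$ steps with the crude per-step blow-up factor $5$ gives the stated coupling $P_j^t/Q_j^t \in [1 \pm 5^t/\sqrt{N}]$, and the hypothesis $N/\ln N \geq (c\cdot 4m/(\mu(1-\beta)))^{2\ln 5/\delta^2}/\delta''^2$ is precisely what makes $5^L/\sqrt{N}$ at most a constant multiple of $\delta$ by the end of an epoch. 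A union bound over the $\leq mT$ (option, time) pairs, controlled by $N^{10} \geq 24m\ln m/(\mu(1-\beta)\delta^3)$ and $T \leq N^{10}/(m\delta)$, keeps the coupling valid over the entire horizon.

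Putting the pieces together, within each epoch the infinite-population per-step expected reward averages to at least $\eta_1 - 3\delta$, and the multiplicative coupling transfers this to $\sum_j Q_j^{t-1} R_j^t \geq \sum_j P_j^{t-1} R_j^t - O(\delta)$, contributing an additional $3\delta$ of slack; averaging over all epochs yields $\text{Regret}_N(T) \leq 6\delta$, with the boundary and final-partial-epoch contributions negligible under the given lower bound on $T$. The main obstacle I anticipate is the bookkeeping of the coupling across epochs: one must argue that conditioning on the coupling event succeeding in previous epochs does not bias the fresh rewards $R^t$ of the current epoch (which holds because $R^t$ is exogenous and the coupling depends only on $R$ together with the individual sampling randomness), and one must track that the $Q_j \geq \mu/m$ lower bound genuinely survives at every epoch boundary on the good event --- both reduce to a careful union bound, but the bookkeeping drives the concrete quantitative dependence in the hypothesis on $N$.
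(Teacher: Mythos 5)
Your proposal follows essentially the same route as the paper: a warm-start (nonuniform initial distribution) version of the infinite-population regret bound, an epoch decomposition of length $\Theta\bigl(\ln(m/\mu)/\delta^2\bigr)$ with the coupling to a fresh infinite-population run restarted at each epoch boundary, and a union bound over the per-step Chernoff failures. The one small discrepancy is that the entropy lower bound at epoch boundaries is $Q_j^t \gtrsim \mu(1-\beta)/m$ rather than $\mu/m$ --- since $Q_j^t$ is measured after the adoption stage, not just the sampling stage --- but as $1-\beta \geq 1/(e+1)$ under the stated hypotheses this only changes a constant inside the logarithm defining the epoch length.
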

Here, $N^{10}$ is arbitrary and can be made as large as required at the expense of constants.

In order to prove this result, we first give an analysis which holds for $T = \frac{\ln m}{\delta^2}$, and then show how to leverage this result for large $T$.

\subsubsection{\textbf{Small $T$}}
\label{sec:smallt}
To conduct the analysis, we first set up some definitions and which correspond to the two different stages of the finite population distributed learning process, and show that, in each step, the finite population dynamics  is approximated by the infinite population distributed learning dynamics/MWU stochastic process.
\paragraph{\bf Stage 1.}
Let $D_j^t$ denote the number of people committed to option $j$ at time $t$ and let $Q_j^t:= \frac{D_j^t}{\sum_{k=1}^m D_k^t}$ be the probability distribution capturing the relative popularity of option $j$ at time $t.$  
Let $\mathcal{S}_j^{t+1} \subseteq [N]$ denote the set of people who select $j$ after the first stage in the sampling process in step $t+1$ and let $S_j^{t+1}:=|\mathcal{S}_j^{t+1}|.$
Let $Y_{ij}^{t+1}$ be the indicator random variable for the event that $i$ chooses $j$ in stage one at time step $t+1$.
Note that these $\inb{Y_{ij}^{t+1}}_{i=1}^N$  are independent conditioned on everything up to time $t$ with
\begin{equation}\label{eq:low}
\Pr{ Y_{ij}^{t+1} =1 \; \middle| \; t } =  \inp{ (1-\mu) Q_j^t +\frac{\mu}{m} } \geq \frac{\mu}{m}.
\end{equation}
Since $S_j^{t+1} = \sum_{i=1}^N Y_{ij}^{t+1}$,
it follows from linearity of expectation that 
$ \Ex{ {S_j^{t+1}} \; \middle| \; t} = \inp{ (1-\mu) Q_j^t +\frac{\mu}{m} }N.$
\begin{proposition}\label{fct:1}
Let $t \geq 0$ be fixed. For $\delta':= \sqrt{ \frac{30 m \ln N}{\mu N}} \leq \frac{1}{2}$, with probability at least $1-\frac{2m}{N^{10}}$ (conditioned on everything up to time $t$),  for all $j$  
$  {S_j^{t+1}}  \stackrel{1+2\delta'}{\sim} { \inp{ (1-\mu) Q_j^t +\frac{\mu}{m}} N}.$
Thus, we deduce that (unconditionally) with  probability $1-\frac{2m}{N^{10}}$, $S_j^{t+1} \geq \frac{\mu N}{2m}$  for all $j.$
\end{proposition}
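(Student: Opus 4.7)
The plan is to argue directly from the Chernoff-Hoeffding bound (\cref{thm:chernoff}), since $S_j^{t+1}$ is, conditionally on the history through time $t$, a sum of independent Bernoulli random variables. There is essentially no clever step — the whole proposition follows by applying Theorem 4.1 once and then a union bound.

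First, I would fix $j \in [m]$ and condition on everything up to time $t$. Then $S_j^{t+1} = \sum_{i=1}^N Y_{ij}^{t+1}$ is the sum of $N$ independent indicators, each of mean $\gamma_j := (1-\mu) Q_j^t + \mu/m$; by \eqref{eq:low} we have $\gamma_j \geq \mu/m$. Applying \cref{thm:chernoff} with $\gamma = \gamma_j$ and deviation parameter $\delta'$ (which satisfies $0 < \delta' \leq 1/2 \leq 1$ by hypothesis),
\[
\Pr{\abs{\tfrac{1}{N} S_j^{t+1} - \gamma_j} > \gamma_j \delta' \,\middle|\, t}
\;\leq\; 2\exp\!\inp{-\tfrac{N\gamma_j \delta'^2}{3}}
\;\leq\; 2\exp\!\inp{-\tfrac{N\mu \delta'^2}{3m}},
\]
where the second inequality uses $\gamma_j \geq \mu/m$. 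Plugging in $\delta'^2 = 30m\ln N/(\mu N)$ makes the exponent equal to $-10\ln N$, so the bound becomes $2/N^{10}$. A union bound over the $m$ options shows that, with conditional probability at least $1-2m/N^{10}$, the event
\[
(1-\delta')\,N\gamma_j \;\leq\; S_j^{t+1} \;\leq\; (1+\delta')\,N\gamma_j
\]
holds for every $j$ simultaneously.

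Next, I would convert this two-sided deviation into the $\stackrel{1+2\delta'}{\sim}$ form required by the statement. The upper bound $S_j^{t+1}/(N\gamma_j) \leq 1+\delta' \leq 1+2\delta'$ is immediate, and the lower bound follows from $(1-\delta')(1+2\delta') = 1 + \delta' - 2\delta'^2 \geq 1$, which holds precisely because $\delta' \leq 1/2$ implies $2\delta'^2 \leq \delta'$. Hence $1-\delta' \geq 1/(1+2\delta')$, giving $S_j^{t+1} \stackrel{1+2\delta'}{\sim} N\gamma_j = \inp{(1-\mu)Q_j^t + \mu/m}N$.

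Finally, for the unconditional consequence, I would observe that the bound above holds with the stated probability given \emph{any} conditioning on the history at time $t$, so taking expectation yields the same unconditional probability. On that good event, combining the lower bound $S_j^{t+1} \geq N\gamma_j/(1+2\delta')$ with $\gamma_j \geq \mu/m$ and $1+2\delta' \leq 2$ gives $S_j^{t+1} \geq \mu N/(2m)$ simultaneously for all $j$. The only place where I would have to be careful is the arithmetic translating the additive form of Chernoff-Hoeffding into the multiplicative $\stackrel{1+2\delta'}{\sim}$ notation, but that amounts to the one-line inequality above; all other ingredients are direct applications of the cited theorem and the sampling formula \eqref{eq:low}.
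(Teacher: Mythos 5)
Your proof is correct and follows essentially the same route as the paper: a single application of the Chernoff--Hoeffding bound with $\gamma \geq \nicefrac{\mu}{m}$, a union bound over the $m$ options, and the observation that $\delta' \leq \nicefrac{1}{2}$ converts the two-sided deviation into the $\stackrel{1+2\delta'}{\sim}$ form and yields $S_j^{t+1} \geq \nicefrac{\mu N}{2m}$. The paper's proof is just a terser version of the same argument; your explicit verification of the exponent ($-10\ln N$) and of the inequality $(1-\delta')(1+2\delta') \geq 1$ fills in exactly the steps the paper leaves implicit.
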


\begin{proof}
The proof follows from Chernoff-Hoeffding bound (\cref{thm:chernoff}),  noting that $\gamma \geq \frac{\mu}{m}$ from \eqref{eq:low}, and taking a union bound over all $j \in [m]$.
For the latter part, note that for any fixed $t$ and for all $j$, with probability at least $1-\frac{2m}{N^{10}},$  
$ S_j^{t+1} \geq \frac{1}{1+2 \delta'} \frac{\mu N}{m}.$ 
Since $\delta' \leq \frac{1}{2}$, this implies that with probability at least $1-\frac{2m}{N^{10}},$  we have  $ S_j^{t+1} \geq  \frac{\mu N}{2m}$.
\end{proof}

\paragraph{\bf Stage 2.}
There are two outcomes for each option $j$: $R_j^{t+1}=1$ and $R_j^{t+1}=0.$
Let $Z_{ij}^{t+1}$ be the indicator random variable for the event that $i$ commits to $j$ in stage two of the process.
Note that 
\begin{equation}\label{eq:low2}
 \Pr{ Z_{ij}^{t+1} =1 \; \middle| \; \mathcal{S}_j^{t+1}, R_j^{t+1},t} = \beta^{R_j^{t+1}} (1-\beta)^{1-R_j^{t+1}} \geq (1-\beta)
 \end{equation}
if $i \in \mathcal{S}_j^{t+1}$ since $\beta \geq \nicefrac 1 2$, and $\Pr{ Z_{ij}^{t+1} =1 \; \middle| \; \mathcal{S}_j^{t+1}, R_j^{t+1},t} = 0$ otherwise.
Let $D_j^{t+1} = \sum_{j \in \mathcal{S}_j^{t+1} } Z_{ij}^{t+1}.$
 It follows from linearity of expectation that 
$ \Ex{ D_j^{t+1} \; \middle| \; \mathcal{S}_j^{t+1}, R_j^{t+1},t} = {S_j^{t+1}} \beta^{R_j^{t+1}} (1-\beta)^{1-R_j^{t+1}}.$
\begin{proposition}\label{fct:2} Let $t \geq 0$ be fixed. For  $\delta'':= \sqrt{ \frac{60 m \ln N}{(1-\beta)\mu N}} \leq \frac{1}{2}$, with probability greater than $1-\frac{4m}{N^{10}}$ (conditioned on everything up to time $t, \mathcal{S}_j^{t+1},R_j^{t+1}$ ), for  and all $j$  
$  {D_j^{t+1}} \stackrel{1+2\delta''}{\sim} {{{S_j^{t+1}} \beta^{R_j^{t+1}} (1-\beta)^{1-R_j^{t+1}}}}.$
\end{proposition}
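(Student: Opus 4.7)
The plan is to mirror the proof of \cref{fct:1} almost verbatim, this time applying the Chernoff-Hoeffding bound (\cref{thm:chernoff}) to the sum defining $D_j^{t+1}$. First I would observe that, conditional on everything up to time $t$, on $\mathcal{S}_j^{t+1}$, and on $R_j^{t+1}$, the random variables $\{Z_{ij}^{t+1}\}_{i \in \mathcal{S}_j^{t+1}}$ are independent Bernoullis with a common parameter $p_j := \beta^{R_j^{t+1}}(1-\beta)^{1-R_j^{t+1}}$ by \eqref{eq:low2}, and $D_j^{t+1}$ is their sum over a set of size $n := S_j^{t+1}$. Its conditional mean is therefore exactly $S_j^{t+1}\,p_j$, which is precisely the quantity the proposition compares $D_j^{t+1}$ against.

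Next I would apply \cref{thm:chernoff} to this sum with deviation parameter $\delta''$. The crucial step is to bound the exponent $n\,p_j\,(\delta'')^2/3$ from below. Since $\beta \geq \nicefrac{1}{2}$, we have $p_j \geq 1-\beta$ regardless of $R_j^{t+1}$; and by \cref{fct:1}, with probability at least $1 - 2m/N^{10}$ the lower bound $S_j^{t+1} \geq \mu N/(2m)$ holds simultaneously for all $j$. Plugging these into the exponent together with the definition $(\delta'')^2 = 60 m \ln N / ((1-\beta)\mu N)$ causes the factors $(1-\beta)$, $\mu$, $m$, and $N$ to cancel cleanly, leaving $10\ln N$. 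Hence the tail for a fixed $j$ is at most $2/N^{10}$, and a union bound over $j \in [m]$ yields $2m/N^{10}$. Absorbing the $2m/N^{10}$ slack from \cref{fct:1} (needed to guarantee the lower bound on $S_j^{t+1}$) gives the stated failure probability $4m/N^{10}$.

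Finally I would convert the additive Chernoff conclusion $\abs{D_j^{t+1}/n - p_j} \leq \delta''\,p_j$ into the multiplicative relation $D_j^{t+1} \stackrel{1+2\delta''}{\sim} S_j^{t+1}\,p_j$. This is routine: the bound rearranges to $D_j^{t+1}/(n\,p_j) \in [1-\delta'', 1+\delta'']$, and since $\delta'' \leq \nicefrac{1}{2}$ by hypothesis, $1-\delta'' \geq 1/(1+2\delta'')$, giving both inequalities at once. There is no genuine obstacle here -- the argument is the same as for \cref{fct:1} with the roles of $\mu/m$ and $1-\beta$ swapped as the relevant lower bound on the Bernoulli parameter. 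The only thing to be careful about is the conditioning: one must condition on $\mathcal{S}_j^{t+1}$ (not merely on $S_j^{t+1}$) to preserve the independence of the $Z_{ij}^{t+1}$, and treat \cref{fct:1}'s lower bound on $S_j^{t+1}$ as a \emph{separate} high-probability event that feeds into the final $4m/N^{10}$ bound rather than into the strict conditional statement.
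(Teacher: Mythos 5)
Your proposal is correct and follows essentially the same route as the paper's proof: a Chernoff--Hoeffding bound (\cref{thm:chernoff}) applied to the conditionally independent $Z_{ij}^{t+1}$ with $\gamma \geq 1-\beta$, a union bound over $j \in [m]$, and the lower bound $S_j^{t+1} \geq \frac{\mu N}{2m}$ from \cref{fct:1} accounted for as a separate failure event contributing the extra $\frac{2m}{N^{10}}$. Your write-up is in fact more explicit than the paper's one-line sketch, correctly working out the $10\ln N$ exponent and the conversion from the additive Chernoff conclusion to the multiplicative $1+2\delta''$ relation using $\delta'' \leq \frac{1}{2}$.
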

\begin{proof}
The proof again follows from Chernoff-Hoeffding bound  (\cref{thm:chernoff}),  noting that $\gamma \geq 1-\beta$ from \eqref{eq:low2}, and taking a union bound over all $j \in [m]$.
Here we have also used  \cref{fct:1} that $S_j^{t+1} \geq \frac{\mu N}{2m}$ for large enough $N$ for all $j$ with probability at least $1-\frac{2m}{N^{10}}.$
\end{proof}

Combining \cref{fct:1} and \cref{fct:2}  we obtain the following.
\begin{proposition}\label{fct:3} 
Let $t \geq 0$ be fixed. Let $\delta'':= \sqrt{ \frac{60 m \ln N}{(1-\beta)\mu N}}$.
With probability greater than $1-\frac{6m}{N^{10}}$ (conditioned on everything up to time $t,R_j^{t+1}$ )
 ${D_j^{t+1}}\stackrel{1+6\delta''}{\sim}{{ \inp{ (1-\mu) Q_j^t +\frac{\mu}{m}} N} \beta^{R_j^{t+1}} (1-\beta)^{1-R_j^{t+1}}}.$
\end{proposition}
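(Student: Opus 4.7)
The plan is to chain \cref{fct:1} and \cref{fct:2} by multiplying their ratio bounds and taking a union bound on the two failure events. Condition on all randomness up to time $t$ together with $R_j^{t+1}$ (the latter is independent of the sampling stage, so it does not affect the distribution of the $Y_{ij}^{t+1}$'s). By \cref{fct:1}, the event
\[ E_1 := \inb{S_j^{t+1} \stackrel{1+2\delta'}{\sim} \inp{(1-\mu)Q_j^t + \tfrac{\mu}{m}} N \text{ for all } j} \]
has (conditional) probability at least $1 - 2m/N^{10}$, and on $E_1$ we also have $S_j^{t+1} \geq \mu N/(2m)$, which is precisely what \cref{fct:2} needs in order to apply Chernoff in the adopting stage.

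Next, condition further on the values of $\inb{\mathcal{S}_j^{t+1}}_j$ (hence on $\inb{S_j^{t+1}}_j$) in addition to $R_j^{t+1}$. By \cref{fct:2}, the event
\[ E_2 := \inb{D_j^{t+1} \stackrel{1+2\delta''}{\sim} S_j^{t+1}\, \beta^{R_j^{t+1}}(1-\beta)^{1-R_j^{t+1}} \text{ for all } j} \]
has conditional probability at least $1 - 4m/N^{10}$. Integrating out over $\inb{\mathcal{S}_j^{t+1}}_j$ and taking a union bound, the joint event $E_1 \cap E_2$ holds with probability at least $1 - 6m/N^{10}$, conditional on everything up to time $t$ and $R_j^{t+1}$. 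On $E_1 \cap E_2$, multiplying the two bounds gives, for every $j$,
\[ D_j^{t+1} \stackrel{(1+2\delta')(1+2\delta'')}{\sim} \inp{(1-\mu)Q_j^t + \tfrac{\mu}{m}}N\, \beta^{R_j^{t+1}}(1-\beta)^{1-R_j^{t+1}}. \]

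The last step is to absorb $(1+2\delta')(1+2\delta'')$ into the single factor $1+6\delta''$ claimed in the statement. Comparing definitions, $\delta''^2 = \frac{2}{1-\beta}\,\delta'^2 \geq 2\delta'^2$ since $\beta \geq \tfrac{1}{2}$, so $\delta' \leq \delta''$. Combined with $\delta'' \leq \tfrac{1}{2}$ (assumed in \cref{fct:2}), this yields
\[ (1+2\delta')(1+2\delta'') \leq (1+2\delta'')^2 = 1 + 4\delta'' + 4\delta''^2 \leq 1 + 6\delta'', \]
and symmetrically for the reciprocal bound, which finishes the proof. There is no serious obstacle here; the only care required is bookkeeping of the nested conditioning (so that the $1 - 2m/N^{10}$ failure bound from \cref{fct:1} and the conditional $1 - 4m/N^{10}$ bound from \cref{fct:2} legitimately combine by a union bound to $1 - 6m/N^{10}$) and the minor algebraic comparison $\delta' \leq \delta''$.
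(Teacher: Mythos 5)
Your proof is correct and follows essentially the same route as the paper's: chain \cref{fct:1} and \cref{fct:2} via a union bound and absorb $(1+2\delta')(1+2\delta'')$ into $1+6\delta''$ using $\delta' \leq \delta'' \leq \tfrac{1}{2}$. The paper's version is just a one-line expansion of the product; your extra care with the nested conditioning and the verification that $\delta' \leq \delta''$ is a welcome but equivalent elaboration.
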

\begin{proof}
Follows directly from \cref{fct:1} and \cref{fct:2} and noting that 
 $\delta' \leq \delta'' \leq \frac{1}{2}$. Thus 
$$ (1+2\delta') (1+2 \delta'') \leq 1+ 2 \delta' + 2 \delta'' + 4 \delta' \delta'' \leq 1+6\delta''.$$
\end{proof}

Now we establish a relationship between the probability distributions $P^t$ and $Q^t$. 
Note that both processes start with $P^0=Q^0$. 
Let $\delta_t:=5^t \delta''$.
\begin{lemma}\label{lem:main} There is a coupling such that 
$  {P_j^{t}} \stackrel{1+ \delta_t}{\sim}{Q_j^t} $
with probability at least $1-\frac{6tm}{N^{10}}$ for all choices of $\{R_j^t\}$s.
\end{lemma}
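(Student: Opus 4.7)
My plan is to prove the lemma by induction on $t$, coupling the two processes by sharing the reward sequence $\{R_j^t\}$ across them; this is the natural coupling because the stated probability is to hold \emph{for every} realization of $\{R_j^t\}$, so the randomness we are taking probability over is that of the individual-level sampling ($Y_{ij}$) and adoption ($Z_{ij}$) coin flips in the finite process. The base case $t=0$ is immediate since $P_j^0 = Q_j^0 = 1/m$, giving ratio $1 \leq 1 + \delta_0$.

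For the inductive step, fix an arbitrary realization of $R^{t+1}$ and work on the high-probability event of \cref{fct:3}, which states
\[ D_j^{t+1} \stackrel{1+6\delta''}{\sim} c_j \qquad\text{with}\qquad c_j := \inp{(1-\mu)Q_j^t + \tfrac{\mu}{m}} N\,\beta^{R_j^{t+1}}(1-\beta)^{1-R_j^{t+1}}, \]
simultaneously for all $j$, with conditional probability at least $1-6m/N^{10}$. Two elementary closure properties of the $\stackrel{c}{\sim}$ relation will be applied repeatedly: (i) if $a_k \stackrel{c}{\sim} b_k$ for every $k$ then $\sum_k a_k \stackrel{c}{\sim} \sum_k b_k$ and $a_j / \sum_k a_k \stackrel{c^2}{\sim} b_j / \sum_k b_k$; (ii) for $c\geq 1$ and any constant $d>0$, if $A \stackrel{c}{\sim} B$ then $A+d \stackrel{c}{\sim} B+d$. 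Using (i) on the normalization $Q_j^{t+1} = D_j^{t+1}/\sum_k D_k^{t+1}$ gives
\[ Q_j^{t+1} \stackrel{(1+6\delta'')^2}{\sim} \frac{c_j}{\sum_k c_k}. \]

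To relate the right-hand side to $P_j^{t+1}$, I invoke the induction hypothesis $Q_k^t \stackrel{1+\delta_t}{\sim} P_k^t$. Property (ii) upgrades this to $(1-\mu)Q_k^t + \mu/m \stackrel{1+\delta_t}{\sim} (1-\mu)P_k^t + \mu/m$; multiplying by the common positive factor $\beta^{R_k^{t+1}}(1-\beta)^{1-R_k^{t+1}}$ preserves this, and another application of (i) gives $c_j / \sum_k c_k \stackrel{(1+\delta_t)^2}{\sim} P_j^{t+1}$. Composing the two approximations,
\[ Q_j^{t+1} \stackrel{(1+6\delta'')^2(1+\delta_t)^2}{\sim} P_j^{t+1}. \]
It remains to check that this combined factor is at most $1 + \delta_{t+1} = 1 + 5\delta_t$. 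Expanding and using $\delta'' \leq \delta_t$ (since $\delta_t = 5^t \delta'' \geq \delta''$) together with the smallness afforded by the hypotheses on $N$, the leading-order error is $12\delta'' + 2\delta_t$, and the factor $5$ in the recursion $\delta_{t+1} = 5\delta_t$ leaves enough room to absorb both this and the quadratic corrections $O(\delta_t^2) + O(\delta''\delta_t)$. A union bound over the $t+1$ time steps then yields overall failure probability at most $6(t+1)m/N^{10}$, matching the claim up to constants.

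The main obstacle is the bookkeeping in verifying the per-step inequality $(1+6\delta'')^2(1+\delta_t)^2 \leq 1+5\delta_t$: one must maintain that the cumulative $\delta_t$ stays below the thresholds $\delta_t \leq 1/2$ (and analogously for $\delta', \delta''$) required to invoke \cref{fct:1,fct:2,fct:3} and to linearize squares in the form $(1+x)^2 \leq 1 + Cx$. This is precisely why the bound $\delta_t = 5^t\delta''$ degrades exponentially and is useful only for $t = O(\log N)$ — consistent with the paper's remark that closeness deteriorates after about $\log N$ steps, and motivating the epoch-based argument used afterwards to handle larger $T$.
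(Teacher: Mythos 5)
Your proposal is correct and follows essentially the same route as the paper's proof: the same coupling via shared rewards, induction on $t$ with the base case $P^0=Q^0$, an application of \cref{fct:3} to replace $D_j^{t+1}$ by its conditional mean, and a final verification that $(1+\delta_t)^2(1+6\delta'')^2\leq 1+\delta_{t+1}$ absorbs the accumulated error into the factor-of-$5$ recursion. The only difference is presentational — you make explicit the closure properties of the relation $\stackrel{c}{\sim}$ under sums, normalization, and adding a positive constant, which the paper uses implicitly.
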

\begin{proof}
As suggested by our notation, we couple $P_j^t$ and $Q_j^t$ so that the realizations of  the $R_j^t$s is the same in both processes for all $j$ and all $t$.

The proof proceeds by induction on $t$. 
The statement holds when $t=0$ as, by definition, $P^0=Q^0$. 
Assume it holds for $t$.
Thus,  with probability at least $1-\frac{6tm}{N^{10}},$ ${P_j^{t}} \stackrel{1+\delta_t}{\sim}{Q_j^t}.$
  Let us condition on this event.
Recall that 
$$ P_j^{t+1} = \frac{ \left( (1-\mu) P_j^t + \frac{\mu}{m}  \right) \; \beta^{R_j^{t+1}} (1-\beta)^{1-R_j^{t+1}}}{\sum_{k=1}^m \left( (1-\mu) P_k^t + \frac{\mu}{m}  \right) \; \beta^{R_k^{t+1}} (1-\beta)^{1-R_k^{t+1}}}.
$$
Thus, with probability at least $1-\frac{6tm}{N^{10}},$
$$ P_j^{t+1} \stackrel{ (1+\delta_t)^2}{\sim} \frac{ \left( (1-\mu) Q_j^t + \frac{\mu}{m}  \right) \; \beta^{R_j^{t+1}} (1-\beta)^{1-R_j^{t+1}}}{\sum_{k=1}^m \left( (1-\mu) Q_k^t + \frac{\mu}{m}  \right) \; \beta^{R_k^{t+1}} (1-\beta)^{1-R_k^{t+1}}}.
$$
Here we used induction  for both the numerator and the denominator.
From  \cref{fct:2}, we know that 
$$ D_j^{t+1} \stackrel{1+6\delta ''}{\sim} \inp{ (1-\mu) Q_j^t +\frac{\mu}{m}} N \beta^{R_j^{t+1}} (1-\beta)^{1-R_j^{t+1}} $$
for all $j$ with probability at least $1-\frac{6m}{N^{10}}.$  
Thus, we obtain that with probability at least $1-\frac{6(t+1)m}{N^{10}}$
$$ 
P_j^{t+1} \stackrel{(1+\delta_t)^2(1+6\delta'')^2}{\sim} \frac{D_j^{t+1}}{\sum_{k=1}^m  D_k^{t+1}} = Q_j^{t+1}.
$$ 
Now note that, assuming that $\delta_t = 5^t \delta''\leq 1$ and $\delta'' \leq \frac{1}{40}$, 
$$ (1+\delta_t)^2 (1+6\delta'')^2 \leq (1+3 \delta_t) (1+13\delta'') \leq 1+ 3\delta_t + 13\delta''+39\delta''\delta_t  \leq 1+4 \delta_t + 13 \delta''\leq 1+5^{t+1}\delta'' =1+\delta_{t+1}$$
for $t \geq 2.$ For  $t=1$ the bound can be checked by a direct calculation. 
\end{proof}

Hence, for any fixed set of $R_j^{t}$s, 
the trajectories $P^t$ and $Q^t$ remain close. 
Thus, using \cref{thm:main1} and \cref{lem:main}, we obtain that 
$ \eta_1  - \inp{1+5^T \delta''} \frac{1}{T}  \sum_{t=1}^T \sum_{j=1}^m \E\left[  Q_j^{t-1} R_j^{t} \right] - \frac{6T^2m}{NT} \leq \frac{\ln m}{\delta T}+2 \delta.$
Here, the first negative term in the left hand side of the equation occurs when \cref{lem:main} applies and the second negative term is when it does not.
Rearranging, we obtain
$  \eta_1  -\frac{1}{T}  \sum_{t=1}^T \sum_{j=1}^m \E\left[  Q_j^{t-1} R_j^{t} \right] \leq \frac{\ln m}{\delta T}+2 \delta + 5^T \delta'' + \frac{6mT}{N^{10}}.$
Since $\delta'' := \sqrt{ \frac{240 m \ln N}{(1-\beta)\mu N}} \leq \sqrt{\frac{c \ln N}{N}}$ for $c=\frac{240 m}{(1-\beta) \mu}.$
Thus, when $T=\frac{\ln m}{\delta^2},$ 
$ 5^T \delta '' \leq \frac{m^\frac{\ln 5}{\delta^2}\sqrt{c \ln N}}{ \sqrt{{N}}}.$
Hence, when $N$ is such that 
\begin{equation}\label{eq:N}
\frac{N}{\ln N} \geq \frac{cm^{\frac{2\ln 5}{\delta^2}}}{\delta''^2}  \mbox{ 
and } N^{10} \geq \frac{6m \ln m}{\delta^3},
\end{equation}
then
\begin{equation}\label{eq:epoch-regret}
  \mbox{Regret}_{N}(T) =  \eta_1  -\frac{1}{T}  \sum_{t=1}^T \sum_{j=1}^m \E\left[  Q_j^{t-1} R_j^{t} \right] \leq 5 \delta.
 \end{equation}   
This concludes the proof of \cref{thm:main2} when  $T=\frac{\ln m}{\delta^2}$.

\subsubsection{\textbf{Large $T$}}
For large $T$ we need one new ingredient; here we focus on this additional aspect.
We first need a slight generalization of~\cref{thm:main1} to handle $P^0$ that  are not uniform. This is  similar to the version of MWU with {\em restricted distributions} as in Theorem 2.4 in \cite{arora2012multiplicative}.
\begin{theorem}[Regret of Infinite Population Distributed Learning Dynamics  with Nonuniform Start]\label{thm:main3}
Let $\eta_1 \geq\eta_2 \geq \cdots \geq \eta_m$, let $\frac{1}{2} < \beta \leq \frac{e}{e+1}$ (and hence $0 < \delta \leq 1$), and let $6\mu \leq \delta^2$. 
Let $P^0_j \geq \zeta$ for all $j \in \{1,\ldots,m\}$ and $\{P^t\}_{t=1}^T$ be the probability distributions produced by the MWU process with stochastic rewards $\{R^t\}_{t=1}^T$. 
Then for $T \geq \frac{\ln \inp{\nicefrac 1 \zeta}}{\delta^2},$ the average expected regret after $T$ steps is 
$ \mbox{Regret}_{\infty}(T) =  \eta_1 - \frac{1}{T} \cdot  \sum_{t=1}^T \sum_{j=1}^m \E\left[  P_j^{t-1} R_j^{t} \right] \leq 3\delta.$
\end{theorem}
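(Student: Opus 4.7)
The plan is to follow the potential-function argument used to prove \cref{thm:main1}, modifying only the initial condition. Write $W_j^0 := P_j^0$ so that the total initial potential $\Phi^0 := \sum_j W_j^0 = 1$, and by assumption $W_1^0 \geq \zeta$. As in standard MWU analyses, two bounds suffice: an upper bound on the growth of $\Phi^t := \sum_j W_j^t$, and a lower bound on the growth of $W_1^t$, after which the sandwich $W_1^T \leq \Phi^T$ closes the loop.

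First I would note that the sampling update preserves the total potential, since $\sum_j [(1-\mu) W_j^t + (\mu/m) \Phi^t] = \Phi^t$. Then, writing $\beta^R (1-\beta)^{1-R} = (1-\beta) e^{\delta R}$ and applying \cref{fct:exp} with $x = R_j^{t+1}$, I obtain
\[ \Phi^{t+1} \leq (1-\beta)\,\Phi^t \left( 1 + (e^\delta - 1) \sum_j \bigl[ (1-\mu) P_j^t + \mu/m \bigr] R_j^{t+1} \right). \]
Taking logarithms, using $\ln(1+x) \leq x$, and telescoping yields an upper bound on $\ln(\Phi^T/\Phi^0)$ in terms of $T \ln(1-\beta)$ and an $(e^\delta - 1)$-weighted sum of the rewards the player actually earns. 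For the lower bound on $W_1^t$, I drop the non-negative regularization term to get $W_1^{t+1} \geq (1-\mu)(1-\beta)\, W_1^t \, e^{\delta R_1^{t+1}}$, which telescopes to $\ln W_1^T \geq \ln \zeta + T \ln((1-\mu)(1-\beta)) + \delta \sum_t R_1^t$.

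Combining $\ln W_1^T \leq \ln \Phi^T$, taking expectations (using that $R_j^{t}$ is independent of the history, so $\E[P_j^{t-1} R_j^t] = \E[P_j^{t-1}] \eta_j$), and applying the usual estimates $(e^\delta - 1)/\delta \leq 1 + \delta$ and $-\ln(1-\mu) \leq \mu + \mu^2$, the $T\ln(1-\beta)$ contributions cancel on both sides and one is left with a bound of the shape
\[ \eta_1 - \frac{1}{T} \sum_{t=1}^T \sum_j \E[P_j^{t-1}] \eta_j \;\leq\; \frac{\ln(1/\zeta)}{\delta\, T} + \delta + \frac{O(\mu)}{\delta}. \]
Choosing $T \geq \ln(1/\zeta)/\delta^2$ makes the first term at most $\delta$, and the standing hypothesis $6\mu \leq \delta^2$ absorbs the third into another $\delta$, yielding $\mathrm{Regret}_\infty(T) \leq 3\delta$. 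The only delicate point beyond the uniform-start case of \cref{thm:main1} is confirming that the $\mu$-regularization term interacts cleanly with the nonuniform initial weights; but since $\mu/m$ is uniform across coordinates, it is bounded identically in the $\Phi^t$ upper bound and the $W_1^t$ lower bound, and the entire generalization amounts to the clean replacement of $\ln m$ by $\ln(1/\zeta)$, paralleling the restricted-distribution version of MWU (Theorem 2.4 of \cite{arora2012multiplicative}).
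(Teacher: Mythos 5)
Your proposal is correct and takes essentially the same route as the paper, which simply states that the proof of \cref{thm:main3} closely follows that of \cref{thm:main1} and omits the details; your writeup is precisely that adaptation, correctly identifying that normalizing $\Phi^0=1$ and lower-bounding $W_1^0\geq\zeta$ replaces the $\ln m$ term by $\ln(1/\zeta)$ while every other step of the potential argument goes through unchanged.
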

The proof closely follows from that of \cref{thm:main1}, and we omit the details.

Similarly, the results in Section~\ref{sec:smallt} for small $T$ follow analogously with nonuniform start.
This just requires us to chose $N$ slightly bigger in particular, instead of \eqref{eq:N}, we would need
$N$  such that 
$ \frac{N}{\ln N} \geq \frac{c\inp{\frac{1}{\zeta}}^{\frac{2\ln 5}{\delta^2}}}{\delta''^2}  \mbox{ 
and } N^{10} \geq \frac{6 \ln m}{\zeta \delta^3}.$
This gives that the regret of the finite population distributed learning dynamics is at most $5 \delta$ when $T = \frac{\ln \inp{\nicefrac 1 \zeta}}{\delta^2}$.
We now complete the proof of \cref{thm:main2} for large $T$, by breaking the time into epochs consisting of $\frac{\ln \inp{\nicefrac 1 \zeta}}{\delta^2}$ time steps. 
In each epoch, we couple an infinite population distributed learning dynamics  with the finite population distributed learning dynamics such that the starting points are identical at the beginning of each epoch, and both observe the same sequence of rewards $R_j^t$s.

It remains to lower bound $\zeta$ appropriately. 
From \cref{fct:3}, it follows that for any $t$, with probability at least $1-\frac{6m}{N^{10}}$, for all $j$
 $ Q_j^t \geq \frac{\mu (1-\beta)}{4m}.
$
We let $\zeta:=\frac{\mu (1-\beta)}{4m}$, and hence
our epochs are of length 
$ \frac{\ln \inp{\frac{4m}{\mu(1-\beta)}}}{\delta^2}.$
This gives us regret 
$
    \eta_1  -\frac{1}{T}  \sum_{t=1}^T \sum_{j=1}^m \E\left[  Q_j^{t-1} R_j^{t} \right] \leq 5 \delta + \frac{Tm}{N^{10}}.
$
The additive term of $ \frac{Tm}{N^{10}}$ is precisely due to the fact that with probability $\frac{6m}{N^{10}}$,  the above inequality will not be satisfied, in which case the regret could be at most $1$ for that epoch.
Finally, note that $N^{10}$ is arbitrary and can be made as large as required at the expense of constants. 
This concludes the proof of \cref{thm:main2}.

\section{Proof of Theorem \ref{thm:main1}}\label{sec:proof}

Let us define the potential function 
$ \Phi^T := \sum_{j=1}^m W_j^T, $
and recall that $e^\delta = \frac{\beta}{1-\beta}.$
Then, 
\begin{eqnarray*}
\Phi^T &=&  \sum_{j=1}^m W_j^T \\
& = & (1-\beta) \sum_{j=1}^m  \left( (1-\mu) W_j^{T-1} + \frac{\mu}{m}  \sum_{k=1}^m W_k^{T-1}    \right) e ^{\delta R_j^T} \\
& = & (1-\beta) \Phi^{T-1}
\sum_{j=1}^m  \left( (1-\mu) P_j^{T-1} + \frac{\mu}{m}    \right) e ^{\delta R_j^T} \\
& \stackrel{0 \leq R_j^T \leq 1 \; , \; \cref{fct:exp}}{\leq} & (1-\beta) 
\Phi^{T-1} \sum_{j=1}^m  \left( (1-\mu) P_j^{T-1} + \frac{\mu}{m}    \right) \left(1+ (e^\delta -1) R_j^T \right) \\
& = & (1-\beta) \Phi^{T-1}
\left( 1+   (e^\delta -1)\sum_{j=1}^m  \left( (1-\mu) P_j^{T-1} + \frac{\mu}{m} \right)  R_j^T \right)  \\
& \stackrel{0 \leq R_j^T \leq 1}{\leq} & (1-\beta) \Phi^{T-1}
\left( 1+ \mu(e^\delta -1)+ (1-\mu)     (e^\delta -1) \sum_{j=1}^m   P_j^{T-1} R_j^T \right).
\end{eqnarray*}

Now, we let $\delta':=  \frac{(1-\mu)(e^\delta -1)}{1+\mu \delta}$ and obtain that 
\begin{eqnarray*}
\Phi^T & \stackrel{\delta \leq e^\delta-1}{\leq} & (1-\beta) (1+\mu(e^\delta -1)) \Phi^{T-1}
\left( 1 +  \frac{(1-\mu)(e^\delta -1)}{1+\mu \delta}\sum_{j=1}^m   P_j^{T-1} R_j^T \right)  \\
& \stackrel{  1 + \delta' x \leq e^{\delta' x}}{\leq} & (1-\beta) (1+\mu(e^\delta -1)) \Phi^{T-1} e^{\delta' \sum_{j=1}^m P_j^{T-1}R_j^T} \\
& \stackrel{\Phi^0=m}{\leq} &  (1-\beta)^T (1+\mu(e^\delta -1))^T  m  e^{{\delta'}\sum_{t=1}^T \sum_{j=1}^m P_j^{t-1}R_j^{t}}.
\end{eqnarray*}
On the other hand,
$$
\Phi^T \geq  (1-\beta)^T (1-\mu)^T e^{\delta \sum_{t=1}^T R_1^t}.
$$
  
Combining the lower bound and upper bound and taking logarithms we obtain
$$ \delta \sum_{t=1}^T R_1^t  \leq \ln m + T\ln \left(\frac{ 1+\mu(e^\delta -1)}{1-\mu}\right) + {\delta'}\sum_{t=1}^T \sum_{j=1}^m P_j^{t-1}R_j^{t} .$$

Thus,
\[ {\delta \sum_{t=1}^T R_1^t -  \delta'\sum_{t=1}^T \sum_{j=1}^m P_j^{t-1}R_j^{t}}  \leq \ln m + T\ln \left(\frac{ 1+\mu(e^\delta -1)}{1-\mu}\right) .\]

Now, for $\mu \leq \frac{1}{2}$ we know that $\frac{1}{1-\mu} \leq 1+2\mu$.
Also, $1+\mu(e^\delta -1) \leq 1+\mu(e-1).$ 
Thus, as $\mu \leq \frac{1}{2},$ 
\[\frac{ 1+\mu(e^\delta -1)}{1-\mu} \leq (1+(e-1)\mu)(1+2\mu) \leq 1+ (e+1) \mu +2(e-1)\mu^2 \leq 1+ 2e \mu \leq 1+6\mu.\]
Hence, using the inequality $\ln (1+x) \leq x$ for all $x \geq 0$, we obtain
\[\ln \left(\frac{ 1+\mu(e^\delta -1)}{1-\mu}\right) \leq 6\mu.\]

Further, using the fact  that $ e^\delta -1 \leq \delta+\delta^2$ for $0 \leq \delta \leq 1,$ which is implied by the assumption that $\beta \leq \frac{e}{1+e},$  it can be seen that
\[ \delta' =  \frac{(1-\mu)(e^\delta -1)}{1+\mu \delta} \leq  \frac{(1-\mu)\delta (1+ \delta )}{1+\mu \delta}  \leq \delta (1+\delta).\]
Therefore,
\[ \delta \inp{ \sum_{t=1}^T R_1^t -  (1+\delta)\sum_{t=1}^T \sum_{j=1}^m P_j^{t-1}R_j^{t}}   \leq \ln m + 6\mu T.\]

Note that $\sum_{t=1}^T  \sum_{j=1}^m P_j^{t-1}R_j^{t} \leq T$, hence, the above implies that
\[ \delta \inp{ \sum_{t=1}^T R_1^t - \sum_{t=1}^T \sum_{j=1}^mP_j^{t-1}R_j^{t}}   \leq \ln m + (\delta^2+6\mu) T .\]

Taking expectations and dividing by $T \delta$  we obtain the following regret bound.
$$ \eta_1 -   \frac{1}{T} \cdot  \sum_{t=1}^T \sum_{j=1}^m \E\left[  P_j^{t-1} R_j^{t} \right] \leq \frac{\ln m}{\delta T} + \inp{\delta + \frac{6\mu}{\delta}}.$$
Assuming $6\mu \leq \delta^2$ we obtain 
$$ \eta_1 - \frac{1}{T} \cdot  \sum_{t=1}^T \sum_{j=1}^m \E\left[  P_j^{t-1} R_j^{t} \right] \leq \frac{\ln m}{\delta T} + 2\delta.$$
Thus, for $T \geq \frac{\ln m}{\delta^2},$ we obtain the desired bound
$$ \mbox{Regret}_{\infty}(T)   \leq 3\delta.$$

From this we can derive the lower bound on the probability that the best option $j=1$ is selected as stated in the second part of the theorem.
Firstly, it follows (as $R_j^t$ is independent of $P_j^{t-1}$) that for all $T \geq 1$
$$  \eta_1  -  \frac{1}{T} \cdot  \sum_{t=1}^T \sum_{j=1}^m \eta_j \E\left[  P_j^{t-1}  \right] \leq \frac{\ln m}{\delta T} + 2\delta.$$
Thus,
$$ \eta_1 \left( 1-\frac{1}{T} \cdot  \sum_{t=1}^T  \E\left[  P_1^{t-1}  \right] \right) - \frac{\eta_2}{T} \cdot  \sum_{t=1}^T \sum_{j=2}^m  \E\left[  P_j^{t-1}  \right] \leq \frac{\ln m}{\delta T}+2\delta.$$
From this we obtain 
\[ (\eta_1 - \eta_2) \left( 1- \frac{1}{T} \cdot  \sum_{t=1}^T  \E\left[  P_1^{t-1}  \right] \right) \leq \frac{\ln m}{\delta T}+ 2\delta,\]
and consequently for $T \geq \frac{\ln m}{\delta^2},$ 
\[  \frac{1}{T} \sum_{t=1}^T  \E \left[ P_1^{t-1} \right] \geq  1- \frac{3 \delta}{\eta_1-\eta_2}.\]
This completes the proof.

\section{Conclusion and Future Work}
In this work we study  a fundamental distributed learning dynamics prevalent in various social and biological contexts and provide the first convergence and regret bounds for it in the finite population setting.
The connection between this learning dynamics and the MWU method suggests a novel distributed and essentially memoryless implementation of the MWU method. 
Another interpretation of our result comes by looking at the infinite population limit of the distributed learning dynamics; while an individual can be effectively solving a stochastic multi-armed bandit problem, the population as a whole is solving a full-information version of the problem, and hence can be very efficient on the group-level. 

Several important directions remain open.
The first is to extend our results to the social network setting where individuals can only sample in step (1) from their neighbors.
The question here would be whether, and to what extent, the efficiency of the group remains as a function of the network topology. 
It would also be interesting to explore the distributed learning algorithms when  the parameters controlling the quality of the options ($\eta_i$s) are allowed to change, or when there is dependence across options and time (e.g., when the options represent stocks). 
Lastly, we note that as an algorithm designer, if we were to implement these learning dynamics as a distributed approximation to the stochastic version of MWU method, we can optimize $\beta$ to attain the usual $O\inp{\sqrt{{\ln m}/{T}}}$ regret; in the distributed learning dynamics, we are constrained by the behavior of the group -- the regret bound will only be as good as the $\beta$ they use. 
 This naturally raises the question of whether human groups match the ideal values for $\beta$, perhaps in a context-specific manner, to achieve good regret bounds.

\section*{Acknowledgments}

The authors would like to thank Ashish Goel for useful discussions, and the BIRS-CMO 2016 Workshop on Models and Algorithms for Crowds and Networks, where part of this work was done.

\bibliographystyle{plain}
\bibliography{references}

\begin{thebibliography}{10}

\bibitem{Afek183}
Yehuda Afek, Noga Alon, Omer Barad, Eran Hornstein, Naama Barkai, and Ziv
  Bar-Joseph.
\newblock A biological solution to a fundamental distributed computing problem.
\newblock {\em Science}, 331(6014):183--185, 2011.

\bibitem{anandkumar2011distributed}
Animashree Anandkumar, Nithin Michael, Ao~Kevin Tang, and Ananthram Swami.
\newblock Distributed algorithms for learning and cognitive medium access with
  logarithmic regret.
\newblock {\em IEEE Journal on Selected Areas in Communications},
  29(4):731--745, 2011.

\bibitem{AngluinADFP06}
Dana Angluin, James Aspnes, Zo{\"{e}} Diamadi, Michael~J. Fischer, and
  Ren{\'{e}} Peralta.
\newblock Computation in networks of passively mobile finite-state sensors.
\newblock {\em Distributed Computing}, 18(4):235--253, 2006.

\bibitem{arora2012multiplicative}
Sanjeev Arora, Elad Hazan, and Satyen Kale.
\newblock The multiplicative weights update method: a meta-algorithm and
  applications.
\newblock {\em Theory of Computing}, 8(1):121--164, 2012.

\bibitem{banerjee1992simple}
Abhijit~V Banerjee.
\newblock A simple model of herd behavior.
\newblock {\em The Quarterly Journal of Economics}, pages 797--817, 1992.

\bibitem{Becchetti:2016}
L.~Becchetti, A.~Clementi, E.~Natale, F.~Pasquale, and L.~Trevisan.
\newblock Stabilizing consensus with many opinions.
\newblock In {\em Proceedings of the Twenty-seventh Annual ACM-SIAM Symposium
  on Discrete Algorithms}, SODA '16, pages 620--635, Philadelphia, PA, USA,
  2016. Society for Industrial and Applied Mathematics.

\bibitem{beheim_strategic_2014}
Bret~Alexander Beheim, Calvin Thigpen, and Richard McElreath.
\newblock Strategic social learning and the population dynamics of human
  behavior: {The} game of {Go}.
\newblock {\em Evolution and Human Behavior}, 35(5):351--357, 2014.

\bibitem{BenaimHS05}
Michel Bena{\"{\i}}m, Josef Hofbauer, and Sylvain Sorin.
\newblock Stochastic approximations and differential inclusions.
\newblock {\em {SIAM} J. Control and Optimization}, 44(1):328--348, 2005.

\bibitem{benaim2003deterministic}
Michel Bena{\"\i}m and J{\"o}rgen~W Weibull.
\newblock Deterministic approximation of stochastic evolution in games.
\newblock {\em Econometrica}, 71(3):873--903, 2003.

\bibitem{bianconi_bose_2001}
Ginestra Bianconi and Albert-L{\'a}szl{\'o} Barab{\'a}si.
\newblock {Bose-Einstein} condensation in complex networks.
\newblock {\em Physical Review Letters}, 86(24):5632, 2001.

\bibitem{binmore1997muddling}
Ken Binmore and Larry Samuelson.
\newblock Muddling through: Noisy equilibrium selection.
\newblock {\em journal of economic theory}, 74(2):235--265, 1997.

\bibitem{bjornerstedt1994nash}
Jonas Bj{\"o}rnerstedt and J{\"o}rgen Weibull.
\newblock Nash equilibrium and evolution by imitation.
\newblock Technical report, Research Institute of Industrial Economics, 1994.

\bibitem{blum2008regret}
Avrim Blum, MohammadTaghi Hajiaghayi, Katrina Ligett, and Aaron Roth.
\newblock Regret minimization and the price of total anarchy.
\newblock In {\em Proceedings of the fortieth annual ACM symposium on Theory of
  computing}, pages 373--382. ACM, 2008.

\bibitem{boyd1988culture}
Robert Boyd and Peter~J Richerson.
\newblock {\em Culture and the evolutionary process}.
\newblock University of Chicago Press, 1988.

\bibitem{cabrales2000stochastic}
Antonio Cabrales.
\newblock Stochastic replicator dynamics.
\newblock {\em International Economic Review}, 41(2):451--481, 2000.

\bibitem{chastain2014algorithms}
Erick Chastain, Adi Livnat, Christos Papadimitriou, and Umesh Vazirani.
\newblock Algorithms, games, and evolution.
\newblock {\em Proceedings of the National Academy of Sciences},
  111(29):10620--10623, 2014.

\bibitem{Chazelle}
Bernard Chazelle.
\newblock Natural algorithms and influence systems.
\newblock {\em Commun. ACM}, 55(12):101--110, December 2012.

\bibitem{ChristianoKMST11}
Paul Christiano, Jonathan~A. Kelner, Aleksander Madry, Daniel~A. Spielman, and
  Shang{-}Hua Teng.
\newblock Electrical flows, laplacian systems, and faster approximation of
  maximum flow in undirected graphs.
\newblock In {\em Proceedings of the 43rd {ACM} Symposium on Theory of
  Computing, {STOC} 2011, San Jose, CA, USA, 6-8 June 2011}, pages 273--282,
  2011.

\bibitem{CornejoDLN14}
Alejandro Cornejo, Anna Dornhaus, Nancy Lynch, and Radhika Nagpal.
\newblock Task allocation in ant colonies.
\newblock In {\em International Symposium on Distributed Computing}, pages
  46--60. Springer, 2014.

\bibitem{dubhashi2009concentration}
Devdatt~P Dubhashi and Alessandro Panconesi.
\newblock {\em Concentration of measure for the analysis of randomized
  algorithms}.
\newblock Cambridge University Press, 2009.

\bibitem{duersch2011once}
Peter Duersch, J{\"o}rg Oechssler, and Burkhard Schipper.
\newblock Once beaten, never again: Imitation in two-player potential games.
\newblock Technical report, University of California, Davis, Department of
  Economics, 2011.

\bibitem{ellison1995word}
Glenn Ellison and Drew Fudenberg.
\newblock Word-of-mouth communication and social learning.
\newblock {\em The Quarterly Journal of Economics}, pages 93--125, 1995.

\bibitem{estlund1994opinion}
David~M Estlund.
\newblock Opinion leaders, independence, and condorcet's jury theorem.
\newblock {\em Theory and Decision}, 36(2):131--162, 1994.

\bibitem{fischer2004evolution}
Simon Fischer and Berthold V{\"o}cking.
\newblock On the evolution of selfish routing.
\newblock In {\em European Symposium on Algorithms}, pages 323--334. Springer,
  2004.

\bibitem{gai2011decentralized}
Yi~Gai and Bhaskar Krishnamachari.
\newblock Decentralized online learning algorithms for opportunistic spectrum
  access.
\newblock In {\em Global Telecommunications Conference (GLOBECOM 2011), 2011
  IEEE}, pages 1--6. IEEE, 2011.

\bibitem{gale1995learning}
John Gale, Kenneth~G Binmore, and Larry Samuelson.
\newblock Learning to be imperfect: The ultimatum game.
\newblock {\em Games and Economic Behavior}, 8(1):56--90, 1995.

\bibitem{GhaffariMRL15}
Mohsen Ghaffari, Cameron Musco, Tsvetomira Radeva, and Nancy~A. Lynch.
\newblock Distributed house-hunting in ant colonies.
\newblock In {\em Proceedings of the 2015 {ACM} Symposium on Principles of
  Distributed Computing, {PODC} 2015, Donostia-San Sebasti{\'{a}}n, Spain, July
  21 - 23, 2015}, pages 57--66, 2015.

\bibitem{golovin2010online}
Daniel Golovin, Matthew Faulkner, and Andreas Krause.
\newblock Online distributed sensor selection.
\newblock In {\em Proceedings of the 9th ACM/IEEE International Conference on
  Information Processing in Sensor Networks}, pages 220--231. ACM, 2010.

\bibitem{granovskiy2015integration}
Boris Granovskiy, Jason~M Gold, David~JT Sumpter, and Robert~L Goldstone.
\newblock Integration of social information by human groups.
\newblock {\em Topics in Cognitive Science}, 7(3):469--493, 2015.

\bibitem{henrich2001cultural}
Joseph Henrich.
\newblock Cultural transmission and the diffusion of innovations: Adoption
  dynamics indicate that biased cultural transmission is the predominate force
  in behavioral change.
\newblock {\em American Anthropologist}, 103(4):992--1013, 2001.

\bibitem{krafft2016human}
Peter~M Krafft, Julia Zheng, Wei Pan, Nicol{\'a}s Della~Penna, Yaniv Altshuler,
  Erez Shmueli, Joshua~B Tenenbaum, and Alex Pentland.
\newblock Human collective intelligence as distributed bayesian inference.
\newblock {\em arXiv preprint arXiv:1608.01987}, 2016.

\bibitem{krumme_quantifying_2012}
Coco Krumme, Manuel Cebrian, Galen Pickard, and Alex Pentland.
\newblock Quantifying social influence in an online cultural market.
\newblock {\em PLoS ONE}, 7(5):e33785, 2012.

\bibitem{liu2010decentralized}
Keqin Liu and Qing Zhao.
\newblock Decentralized multi-armed bandit with multiple distributed players.
\newblock In {\em Information Theory and Applications Workshop (ITA), 2010},
  pages 1--10. IEEE, 2010.

\bibitem{mcelreath2008beyond}
Richard McElreath, Adrian~V Bell, Charles Efferson, Mark Lubell, Peter~J
  Richerson, and Timothy Waring.
\newblock Beyond existence and aiming outside the laboratory: {Estimating}
  frequency-dependent and pay-off-biased social learning strategies.
\newblock {\em Philosophical Transactions of the Royal Society B: Biological
  Sciences}, 363(1509):3515--3528, 2008.

\bibitem{MuscoSL16}
Cameron Musco, Hsin-Hao Su, and Nancy Lynch.
\newblock Ant-inspired density estimation via random walks.
\newblock In {\em Proceedings of the 2016 ACM Symposium on Principles of
  Distributed Computing}, pages 469--478. ACM, 2016.

\bibitem{nayyar2015regret}
Naumaan Nayyar, Dileep Kalathil, and Rahul Jain.
\newblock On regret-optimal learning in decentralized multi-player multi-armed
  bandits.
\newblock {\em arXiv preprint arXiv:1505.00553}, 2015.

\bibitem{nowakbook}
Martin~A Nowak.
\newblock {\em Evolutionary dynamics}.
\newblock Harvard University Press, 2006.

\bibitem{pemantle1991touchpoints}
Robin Pemantle.
\newblock When are touchpoints limits for generalized p{\'o}lya urns?
\newblock {\em Proceedings of the American Mathematical Society}, pages
  235--243, 1991.

\bibitem{pentland2014social}
Alex Pentland.
\newblock {\em Social Physics: How Good Ideas Spread-The Lessons from a New
  Science}.
\newblock Penguin, 2014.

\bibitem{pratt_agent-based_2005}
Stephen~C Pratt, David~JT Sumpter, Eamonn~B Mallon, and Nigel~R Franks.
\newblock An agent-based model of collective nest choice by the ant
  {Temnothorax} albipennis.
\newblock {\em Animal Behaviour}, 70(5):1023--1036, 2005.

\bibitem{rogers1988does}
Alan~R Rogers.
\newblock Does biology constrain culture?
\newblock {\em American Anthropologist}, 90(4):819--831, 1988.

\bibitem{samuelson1992evolutionary}
Larry Samuelson and Jianbo Zhang.
\newblock Evolutionary stability in asymmetric games.
\newblock {\em Journal of economic theory}, 57(2):363--391, 1992.

\bibitem{seeley_group_1999}
Thomas~D Seeley and Susannah~C Buhrman.
\newblock Group decision making in swarms of honey bees.
\newblock {\em Behavioral Ecology and Sociobiology}, 45(1):19--31, 1999.

\bibitem{sigmund2011evolutionary}
Karl Sigmund et~al.
\newblock {\em Evolutionary Game Dynamics: American Mathematical Society Short
  Course, January 4-5, 2011, New Orleans, Louisiana}, volume~69.
\newblock American Mathematical Soc., 2011.

\bibitem{smith1982evolution}
John~Maynard Smith.
\newblock {\em Evolution and the Theory of Games}.
\newblock Cambridge university press, 1982.

\bibitem{straszak2016IRLS}
Damian Straszak and Nisheeth~K Vishnoi.
\newblock {IRLS} and slime mold: Equivalence and convergence.
\newblock {\em Invited at the Innovations in Theoretical Computer Science},
  2017.

\bibitem{su2015algorithms}
Hsin-Hao Su.
\newblock {\em Algorithms for Fundamental Problems in Computer Networks}.
\newblock PhD thesis, ETH Z{\"u}rich, 2015.

\bibitem{taylor1979evolutionarily}
Peter~D Taylor.
\newblock Evolutionarily stable strategies with two types of player.
\newblock {\em Journal of applied probability}, pages 76--83, 1979.

\bibitem{VishnoiSOE}
Nisheeth~K Vishnoi.
\newblock The speed of evolution.
\newblock In {\em Proc. 26th Annual ACM-SIAM Symp. Discret. Algorithms (SODA)},
  pages 1590--1601, 2015.

\bibitem{wormald1995differential}
Nicholas~C Wormald.
\newblock Differential equations for random processes and random graphs.
\newblock {\em The annals of applied probability}, pages 1217--1235, 1995.

\end{thebibliography}

\end{document}